\newcommand\Tstrut{\rule{0pt}{2.6ex}}
\title{Formal Ways for Measuring Relations between Concepts in Conceptual Spaces\thanks{This article is a revised and considerably extended version of a conference paper presented at SGAI 2017 \cite{Bechberger2017SGAI}.}}
\author{Lucas Bechberger \Letter(0000-0002-1962-1777) \and Kai-Uwe K\"uhnberger}
\institute{Institute of Cognitive Science, Osnabr\"uck University, Osnabr\"uck, Germany \email{lucas.bechberger@uni-osnabrueck.de}, \email{kai-uwe.kuehnberger@uni-osnabrueck.de}}
\begin{document}
\maketitle

%================================================================================================================================================================%

\begin{abstract}
The highly influential framework of conceptual spaces provides a geometric way of representing knowledge. Instances are represented by points in a high-dimensional space and concepts are represented by regions in this space. In this article, we extend our recent mathematical formalization of this framework by providing quantitative mathematical definitions for measuring relations between concepts: We develop formal ways for computing concept size, subsethood, implication, similarity, and betweenness. This considerably increases the representational capabilities of our formalization and makes it the most thorough and comprehensive formalization of conceptual spaces developed so far.
\begin{keywords}
Conceptual Spaces \textperiodcentered\; Fuzzy Sets \textperiodcentered\; Concept Size \textperiodcentered\; Subsethood \textperiodcentered\; Implication \textperiodcentered\; Similarity \textperiodcentered\; Betweenness
\end{keywords}
\end{abstract}

%================================================================================================================================================================%
\section{Introduction}
\label{Intro}

One common criticism of symbolic AI approaches is that the symbols they operate on do not contain any meaning: For the system, they are just arbitrary tokens that can be manipulated in some way. This lack of inherent meaning in abstract symbols is called the “symbol grounding problem” \cite{Harnad1990}. One approach towards solving this problem is to devise a grounding mechanism that connects  abstract symbols to the real world, i.e., to perception and action.

The cognitive framework of conceptual spaces \cite{Gardenfors2000,Gardenfors2014} attempts to bridge this gap between symbolic and subsymbolic AI by proposing an intermediate conceptual layer based on geometric representations.
A conceptual space is a similarity space spanned by a number of quality dimensions that are based on perception and/or subsymbolic processing. Regions in this space correspond to concepts and can be referred to as abstract symbols.

The framework of conceptual spaces has been highly influential in the last 15 years within cognitive science \cite{Douven2011,Fiorini2013,Lieto2017}. It has also sparked considerable research in various subfields of artificial intelligence, ranging from robotics and computer vision \cite{Chella2003} over the semantic web \cite{Adams2009a} to plausible reasoning \cite{Derrac2015}.

One important aspect of conceptual representations is however often ignored by these research efforts: Typically, the different features of a concept are correlated with each other. For instance, there is an obvious correlation between the color and the taste of an apple: Red apples tend to be sweet and green apples tend to be sour. Recently, we have proposed a formalization of the conceptual spaces framework that is capable of representing such correlations in a geometric way \cite{Bechberger2017KI}. Our formalization not only contains a parametric definition of concepts, but also different operations to create new concepts from old ones (namely: intersection, unification, and projection). 

In this article, we provide mathematical definitions for the notions of concept size, subsethood, implication, similarity, and betweenness. This considerably increases the representational power of our formalization by introducing measurable ways of describing relations between concepts.

The remainder of this article is structured as follows:
Section \ref{CS} introduces the general framework of conceptual spaces along with our recent formalization. In Section \ref{Extension}, we extend this formalization with additional operations and in Section \ref{Example} we provide an illustrative example. Section \ref{RelatedWork} contains a summary of related work and Section \ref{Conclusion} concludes the paper.

%================================================================================================================================================================%
\section{Conceptual Spaces}
\label{CS}

This section presents the cognitive framework of conceptual spaces as described by \cite{Gardenfors2000} and as formalized by \cite{Bechberger2017KI}.

%--------------------------------------------------------------------------------------------------------------------------------------------------------------------------------------------------------------------------------------------------------------------------------------------------------------%
\subsection{Dimensions, Domains, and Distance}
\label{CS:DimensionsDomainsDistance}

%dimensions
A conceptual space is a similarity space spanned by a set $D$ of so-called ``quality dimensions''. Each of these dimensions $d \in D$ represents an interpretable way in which two stimuli can be judged to be similar or different. Examples for quality dimensions include temperature, weight, time, pitch, and hue. The distance between two points $x$ and $y$ with respect to a dimension $d$ is denoted as $|x_d - y_d|$.

%domains
A domain $\delta \subseteq D$ is a set of dimensions that inherently belong together. Different perceptual modalities (like color, shape, or taste) are represented by different domains. The color domain for instance consists of the three dimensions hue, saturation, and brightness. Distance within a domain $\delta$ is measured by the weighted Euclidean metric $d_E$.  

The overall conceptual space $CS$ is defined as the product space of all dimensions. Distance within the overall conceptual space is measured by the weighted Manhattan metric $d_M$ of the intra-domain distances. This is supported by both psychological evidence \cite{Attneave1950,Shepard1964,Johannesson2001} and mathematical considerations \cite{Aggarwal2001}. Let $\Delta$ be the set of all domains in $CS$. The combined distance $d_C^{\Delta}$ within $CS$ is defined as follows:
$$
d_C^{\Delta}(x,y,W) = \sum_{\delta \in \Delta}w_{\delta} \cdot \sqrt{\sum_{d \in \delta} w_{d} \cdot |x_{d} - y_{d}|^2}
$$
The parameter $W = \langle W_{\Delta},\{W_{\delta}\}_{\delta \in \Delta}\rangle$ contains two parts: $W_{\Delta}$ is the set of positive domain weights $w_{\delta}$ with $\textstyle\sum_{\delta \in \Delta} w_{\delta} = |\Delta|$. Moreover, $W$ contains for each domain $\delta \in \Delta$ a set $W_{\delta}$ of dimension weights $w_{d}$ with $\textstyle\sum_{d \in \delta} w_{d} = 1$.\\

%similarity as distance-based with exponential decay
The similarity of two points in a conceptual space is inversely related to their distance. This can be written as follows :
$$Sim(x,y) = e^{-c \cdot d(x,y)}\quad \text{with a constant}\; c >0 \; \text{and a given metric}\; d$$

%betweenness and convexity (formal)
Betweenness is a logical predicate $B(x,y,z)$ that is true if and only if $y$ is considered to be between $x$ and $z$. It can be defined based on a given metric $d$: 
$$B_d(x,y,z) :\iff d(x,y) + d(y,z) = d(x,z)$$

The betweenness relation based on $d_E$ results in the line segment connecting the points $x$ and $z$, whereas the betweenness relation based on $d_M$ results in an axis-parallel cuboid between the points $x$ and $z$.
One can define convexity and star-shapedness based on the notion of betweenness:

\begin{definition}
\label{def:Convexity}
(Convexity)\\
A set $C \subseteq CS$ is \emph{convex} under a metric $d \;:\iff$

\hspace{1cm}$\forall {x \in C, z \in C, y \in CS}: \left(B_d(x,y,z) \rightarrow y \in C\right)$
\end{definition}

\begin{definition}
\label{def:StarShapedSet}
(Star-shapedness)\\
A set $S \subseteq CS$ is \emph{star-shaped} under a metric $d$ with respect to a set $P \subseteq S \;:\iff$ 

\hspace{1cm}$\forall {p \in P, z \in S, y \in CS}: \left(B_d(p,y,z) \rightarrow y \in S\right)$
\end{definition}

%--------------------------------------------------------------------------------------------------------------------------------------------------------------------------------------------------------------------------------------------------------------------------------------------------------------%
\subsection{Properties and Concepts}
\label{CS:PropertiesConcepts}

\cite{Gardenfors2000} observes that properties like ``red'', ``round'', and ``sweet'' can be defined on individual domains (e.g., color, shape, taste), whereas full-fleshed concepts like ``apple'' or ``dog'' involve multiple domains.
Each domain involved in representing a concept has a certain importance, which is reflected by so-called ``salience weights''. Another important aspect of concepts are the correlations between the different domains, which are important for both learning \cite{Billman1996} and reasoning \cite[Ch 8]{Murphy2002}.

Based on the principle of cognitive economy, G\"{a}rdenfors argues that both properties and concepts should be represented as convex sets. However, as we demonstrated in \cite{Bechberger2017KI}, one cannot geometrically encode correlations between domains when using convex sets:
The left part of Figure \ref{fig:ConvexityProblem} shows two domains (age and height) with an intuitive sketch for the concepts of child and adult. As domains are combined with the Manhattan metric, a convex set corresponds in this case to an axis-parallel cuboid. One can easily see that this convex representation (middle part of Figure \ref{fig:ConvexityProblem}) is not satisfactory, because the correlation of the two domains is not encoded. We therefore proposed in \cite{Bechberger2017KI} to relax the convexity criterion and to use star-shaped sets, which is illustrated in the right part of Figure \ref{fig:ConvexityProblem}. This enables a geometric representation of correlations while still being only a minimal departure from the original framework.\\
\begin{figure}[tp]
\centering
\includegraphics[width=\columnwidth]{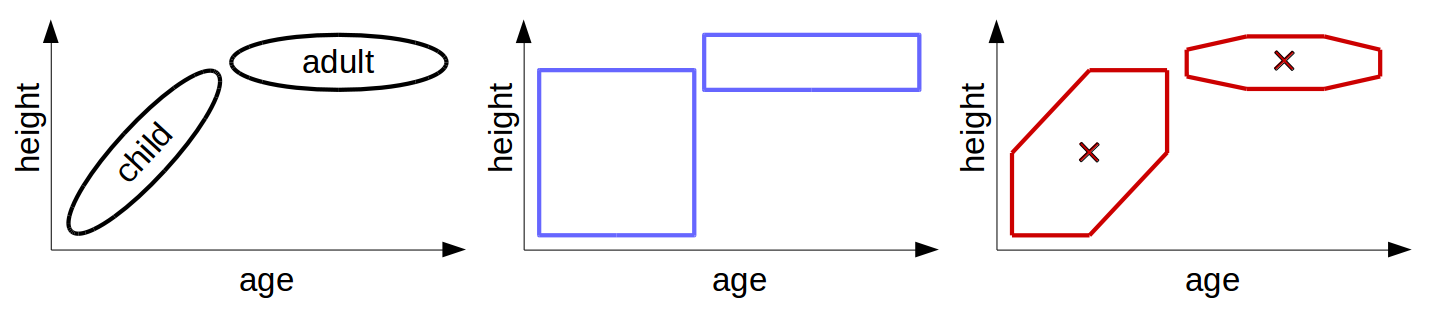}
\caption{Left: Intuitive way to define regions for the concepts of ``adult'' and ``child''. Middle: Representation by using convex sets. Right: Representation by using star-shaped sets with central points marked by crosses.}
\label{fig:ConvexityProblem}
\end{figure}

We have based our formalization on axis-parallel cuboids that can be described by a triple $\langle \Delta_C, p^-, p^+ \rangle$ consisting of a set of domains $\Delta_C$ on which this cuboid $C$ is defined and two points $p^-$ and $p^+$, such that 
$$x \in C \iff \forall{\delta \in \Delta_C}:\forall{d \in \delta}: p_d^- \leq x_d \leq p_d^+$$
These cuboids are convex under $d_C^{\Delta}$. It is also easy to see that any union of convex sets that have a non-empty intersection is star-shaped \cite{Smith1968}. We define the core of a concept as follows:

\begin{definition}
\label{def:SSSS}
(Simple star-shaped set)\\
A \emph{simple star-shaped set} $S$ is described as a tuple $\langle\Delta_S,\{C_1,\dots,C_m\}\rangle$. $\Delta_S \subseteq \Delta$ is a set of domains on which the cuboids $\{C_1,\dots,C_m\}$ (and thus also $S$) are defined. Moreover, it is required that the central region $P :=\textstyle\bigcap_{i = 1}^m C_i \neq \emptyset$. Then the simple star-shaped set $S$ is defined as 
$$S := \bigcup_{i=1}^m C_i$$
\end{definition}

In order to represent imprecise concept boundaries, we use fuzzy sets \cite{Zadeh1965,Bvelohlavek2011}. A fuzzy set is characterized by its membership function $\mu: CS \rightarrow [0,1]$ that assigns a degree of membership to each point in the conceptual space. The membership of a point to a fuzzy concept is based on its maximal similarity to any of the points in the concept's core:

\begin{definition}
\label{def:FSSSS}
(Fuzzy simple star-shaped set)\\
A \emph{fuzzy simple star-shaped set} $\widetilde{S}$ is described by a quadruple $\langle S,\mu_0,c,W\rangle$ where
$S = \langle\Delta_S,\{C_1,\dots,C_m\}\rangle$ is a non-empty simple star-shaped set. The parameter $\mu_0 \in (0,1]$ controls the highest possible membership to $\widetilde{S}$ and is usually set to 1. The sensitivity parameter $c > 0$ controls the rate of the exponential decay in the similarity function. Finally, $W = \langle W_{\Delta_S},\{W_{\delta}\}_{\delta \in \Delta_S}\rangle$ contains positive weights for all domains in $\Delta_S$ and all dimensions within these domains, reflecting their respective importance. We require that $\textstyle\sum_{\delta \in \Delta_S} w_{\delta} = |\Delta_S|$ and that $\forall {\delta \in \Delta_S}:\textstyle\sum_{d \in \delta} w_{d} = 1$.
The membership function of $\widetilde{S}$ is then defined as follows:
$$\mu_{\widetilde{S}}(x) = \mu_0 \cdot \max_{y \in S}\left(e^{-c \cdot d_C^{\Delta}(x,y,W)}\right)$$
\end{definition}

The sensitivity parameter $c$ controls the overall degree of fuzziness of $\widetilde{S}$ by determining how fast the membership drops to zero. The weights $W$ represent not only the relative importance of the respective domain or dimension for the represented concept, but they also influence the relative fuzziness with respect to this domain or dimension.
Note that if $|\Delta_S| = 1$, then $\widetilde{S}$ represents a property, and if $|\Delta_S| > 1$, then $\widetilde{S}$ represents a concept.
Figure \ref{fig:FSSSS} illustrates this definition (the $x$ and $y$ axes are assumed to belong to different domains and are combined with $d_M$ using equal weights).
\begin{figure}[tp]
\centering
\includegraphics[width = \columnwidth]{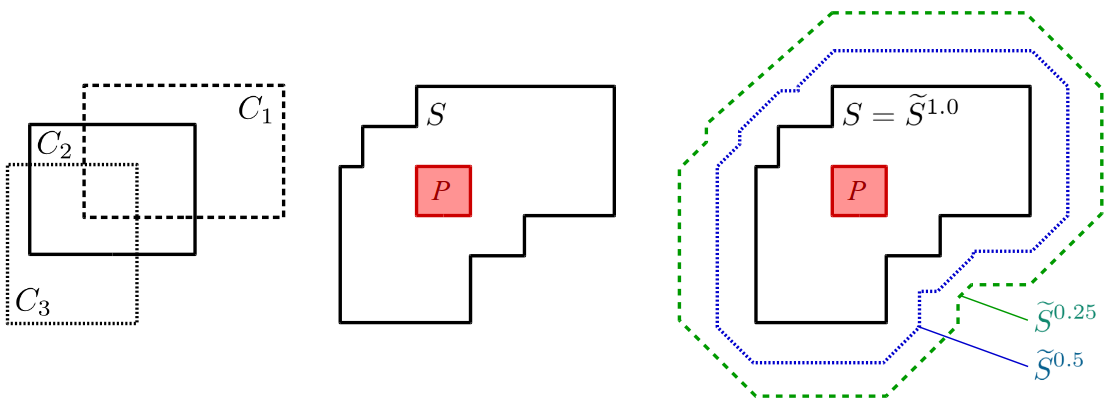} 
\caption{Left: Three cuboids $C_1, C_2, C_3$ with nonempty intersection. Middle: Resulting simple star-shaped set $S$ based on these cuboids. Right: Fuzzy simple star-shaped set $\tilde{S}$ based on $S$ with three $\alpha$-cuts for $\alpha \in \{1.0,0.5,0.25\}$.}
\label{fig:FSSSS}
\end{figure}

In our previous work \cite{Bechberger2017KI}, we have also provided a number of operations, which can be used to create new concepts from old ones, namely intersection, unification, and projection: The intersection of two concepts can be interpreted as the logical ``and'' -- e.g., intersecting the property ``green'' with the concept ``banana'' results in the set of all objects that are both green and bananas. The unification of two concepts can be used to construct more abstract categories (e.g., defining ``fruit'' as the unification of ``apple'', ``banana'', ``coconut'', etc.). Projecting a concept onto a subspace corresponds to focusing on certain domains while completely ignoring others.

%================================================================================================================================================================%
\section{Defining Additional Operations}
\label{Extension}

\subsection{Concept Size}
\label{Extension:Hypervolume}
The size of a concept gives an intuition about its specificity: Large concepts are more general and small concepts are more specific. This is one obvious aspect in which one can compare two concepts to each other.

One can use a measure $M$ to describe the size of a fuzzy set. It can be defined in our context as follows (cf. \cite{Bouchon-Meunier1996}):
\begin{definition}
A measure $M$ on a conceptual space $CS$ is a function $M: \mathcal{F}(CS) \rightarrow \mathbb{R}^+_0$ with $M(\emptyset) = 0$ and $\widetilde{A} \subseteq \widetilde{B} \Rightarrow M(\widetilde{A}) \leq M(\widetilde{B})$, where $\mathcal{F}(CS)$ is the fuzzy power set of $CS$ and where $\widetilde{A} \subseteq \widetilde{B} :\iff \forall {x \in CS}: \mu_{\widetilde{A}}(x) \leq \mu_{\widetilde{B}}(x)$.
\end{definition}

A common measure for fuzzy sets is the integral over the set's membership function, which is equivalent to the Lebesgue integral over the fuzzy set's $\alpha$-cuts:
\begin{equation}
M(\widetilde{A}) := \int_{CS} \mu_{\widetilde{A}}(x)\; dx = \int_{0}^1 V(\widetilde{A}^{\alpha})\; d\alpha
\label{eqn:integral}
\end{equation}
We use $V(\widetilde{A}^{\alpha})$ to denote the volume of a fuzzy set's $\alpha$-cut $\widetilde{A}^{\alpha} = \{ x \in CS \;|\; \mu_{\widetilde{A}}(x) \geq \alpha\}$.
We define for each cuboid $C_i \in S$ its fuzzified version $\widetilde{C}_i$ (cf. Definition \ref{def:FSSSS}):
$$\mu_{\widetilde{C}_i}(x) = \mu_0 \cdot \max_{y \in C_i}\left(e^{-c \cdot d_C^{\Delta}(x,y,W)}\right)$$
It is obvious that $\mu_{\widetilde{S}}(x) = \max_{C_i \in S} \mu_{\widetilde{C}_i}(x)$. It is also clear that the intersection of two fuzzified cuboids is again a fuzzified cuboid. Finally, one can easily see that we can use the inclusion-exclusion formula (cf. e.g., \cite{Bogart1989}) to compute the overall measure of $\widetilde{S}$ based on the measure of its fuzzified cuboids:
\begin{equation}
M(\widetilde{S}) = \sum_{l=1}^m \left((-1)^{l+1} \cdot \sum_{\substack{\{i_1,\dots,i_l\}\\\subseteq\{1,\dots,m\}}}M\left(\bigcap_{i \in \{i_1,\dots,i_l\}} \widetilde{C}_i\right)\right)
\label{eqn:inclusionExclusion}
\end{equation}
The outer sum iterates over the number of cuboids under consideration (with $m$ being the total number of cuboids in S) and the inner sum iterates over all sets of exactly $l$ cuboids. The overall formula generalizes the observation that $|\widetilde{C}_1 \cup \widetilde{C}_2| = |\widetilde{C}_1| + |\widetilde{C}_2| - |\widetilde{C}_1 \cap \widetilde{C}_2|$ from two to $m$ sets.

In order to derive $M(\widetilde{S})$, we first describe how to compute $V(\widetilde{C}^{\alpha})$, i.e., the size of a fuzzified cuboid's $\alpha$-cut. Using Equation \ref{eqn:integral}, we can then derive $M(\widetilde{C})$, which we can in turn insert into Equation \ref{eqn:inclusionExclusion} to compute the overall size of $\widetilde{S}$.\\

Figure \ref{fig:2DAlphaCut} illustrates the $\alpha$-cut of a fuzzified two-dimensional cuboid both under $d_E$ (left) and under $d_M$ (right). Because the membership function is defined based on an exponential decay, one can interpret each $\alpha$-cut as an $\epsilon$-neighborhood of the original cuboid C, where $\epsilon$ depends on $\alpha$: 
$$
x \in {\widetilde{C}}^{\alpha} \iff \mu_0 \cdot \max_{y \in C}\left(e^{-c \cdot d_C^{\Delta}(x,y,W)}\right) \geq \alpha \iff  \min_{y \in C} d_C^{\Delta}(x,y,W) \leq -\frac{1}{c} \cdot \ln\left(\frac{\alpha}{\mu_0}\right)
$$
\begin{figure}[tp]
\centering
\includegraphics[width = 0.75\textwidth]{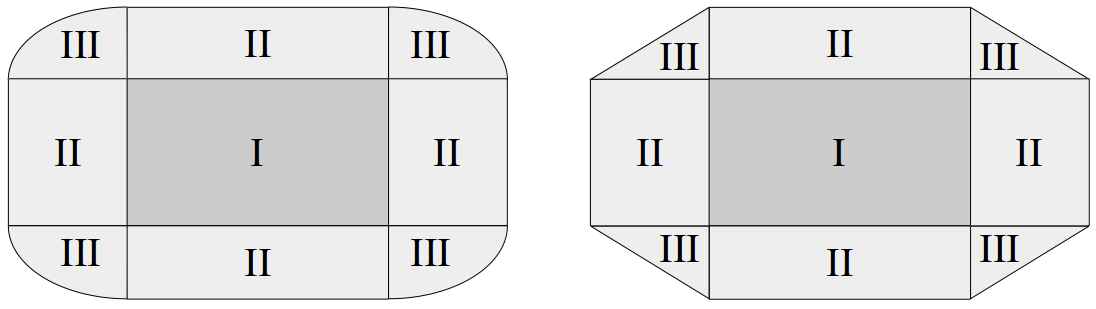}
\caption{$\alpha$-cut of a fuzzified cuboid under $d_E$ (left) and $d_M	$ (right), respectively.}
\label{fig:2DAlphaCut}
\end{figure}

$V(\widetilde{C}^{\alpha})$ can be described as a sum of different components. Let us use the shorthand notation $b_d := p_d^+ - p_d^-$.
Looking at Figure \ref{fig:2DAlphaCut}, one can see that all components of $V(\widetilde{C}^{\alpha})$ can be described by ellipses: Component I is a zero-dimensional ellipse (i.e., a point) that was extruded in two dimensions with extrusion lengths of $b_1$ and $b_2$, respectively. Component II consists of two one-dimensional ellipses (i.e., line segments) that were extruded in one dimension, and component III is a two-dimensional ellipse. Note that ellipses under $d_M$ have the form of streched diamonds.

Let us denote by $\Delta_{\{d_1,\dots,d_i\}}$ the domain structure obtained by eliminating from $\Delta$ all dimensions $d \in D\setminus\{d_1, \dots, d_i\}$. Moreover, let $V(r, \Delta, W)$ be the hypervolume of a hyperball under $d_C^\Delta(\cdot,\cdot, W)$ with radius $r$. In this case, a hyperball is the set of all points with a distance of at most $r$ (measured by $d_C^\Delta(\cdot,\cdot, W)$) to a central point. Note that the weights $W$ will cause this ball to have the form of an ellipse. For instance, in Figure \ref{fig:2DAlphaCut}, we assume that $w_{d_1} < w_{d_2}$ which means that we allow larger differences with respect to $d_1$ than with respect to $d_2$. This causes the hyperballs to be streched in the $d_1$ dimension, thus obtaining the shape of an ellipse.
We can in general describe $V(\widetilde{C}^{\alpha})$ as follows:
$$V(\widetilde{C}^{\alpha}) = \sum_{i=0}^n \left( \sum_{\substack{\{d_1,\dots,d_i\}\\ \subseteq D}} \left( \prod_{\substack{d \in\\ D\setminus\{d_1,\dots,d_i\}}} b_d \right) \cdot V\left( -\frac{1}{c} \cdot \ln\left(\frac{\alpha}{\mu_0}\right),\Delta_{\{d_1, \dots, d_i\}},W \right)\right)$$
The outer sum of this formula runs over the number of dimensions with respect to which a given point $x \in \widetilde{C}^{\alpha}$ lies outside of $C$. We then sum over all combinations $\{d_1,\dots,d_i\}$ of dimensions for which this could be the case, compute the volume $V(\cdot,\cdot,\cdot)$ of the $i$-dimensional hyperball under these dimensions and extrude this intermediate result in all remaining dimensions by multiplying with $\prod_{d \in D\setminus\{d_1,\dots,d_i\}} b_d$.

Let us illustrate this formula for the $\alpha$-cuts shown in Figure \ref{fig:2DAlphaCut}: For $i = 0$, we can only select the empty set for the inner sum, so we end up with $b_1 \cdot b_2$, which is the size of the original cuboid (i.e., component I). For $i = 1$, we can either pick $\{d_1\}$ or $\{d_2\}$ in the inner sum. For $\{d_1\}$, we compute the size of the left and right part of component II by multiplying $V\left( -\frac{1}{c} \cdot \ln\left(\frac{\alpha}{\mu_0}\right),\Delta_{\{d_1\}},W \right)$ (i.e., their combined width) with $b_2$ (i.e., their height). For $\{d_2\}$, we analogously compute the size of the upper and the lower part of component II. Finally, for $i = 2$, we can only pick $\{d_1, d_2\}$ in the inner sum, leaving us with $V\left( -\frac{1}{c} \cdot \ln\left(\frac{\alpha}{\mu_0}\right),\Delta ,W \right)$, which is the size of component III.
One can easily see that the formula for $V(\widetilde{C}^{\alpha})$ also generalizes to higher dimensions.

\begin{proposition}
\label{proposition:hyperball}
$V(r,\Delta, W) = \frac{1}{\prod_{\delta \in \Delta} w_{\delta} \cdot \prod_{d \in \delta} \sqrt{w_d}} \cdot \frac{r^n}{n!} \cdot \prod_{\delta \in \Delta} \left(|\delta|! \cdot \frac{\pi^{\frac{|\delta|}{2}}}{\Gamma\left(\frac{|\delta|}{2}+1\right)}\right)$
\end{proposition}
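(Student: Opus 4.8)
The plan is to reduce the weighted, Manhattan-of-Euclidean ball to a standard unweighted object by two successive linear changes of variables, and then to evaluate the resulting integral by decomposing each domain into polar (radial $\times$ angular) coordinates.

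First I would place the centre of the ball at the origin (the metric depends only on $x-y$, so it is translation invariant and this costs nothing) and eliminate the dimension weights: within each domain $\delta$ substitute $u_d = \sqrt{w_d}\,x_d$ for every $d \in \delta$. This turns the inner term $\sqrt{\sum_{d\in\delta} w_d\,x_d^2}$ into the ordinary Euclidean norm $\|u^{(\delta)}\|_2$ of the block of coordinates belonging to $\delta$, and contributes a Jacobian factor $\prod_{d} w_d^{-1/2}$, which already accounts for the $\prod_{d\in\delta}\sqrt{w_d}$ appearing in the denominator of the claim. Next I would remove the domain weights by the block scaling $v_d = w_\delta\, u_d$ for $d \in \delta$; this leaves the defining inequality in the clean form $\sum_{\delta\in\Delta}\|v^{(\delta)}\|_2 \le r$ and contributes a further Jacobian factor. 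The exponent of $w_\delta$ produced here is governed by the number of coordinates in $\delta$, so this step is the one that most demands careful bookkeeping.

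It then remains to compute the volume of the unweighted region $\{v : \sum_\delta \|v^{(\delta)}\|_2 \le r\}$. For this I would integrate out the angular directions domain by domain: writing $s_\delta = \|v^{(\delta)}\|_2$, the angular integration over the $(|\delta|-1)$-sphere contributes the surface-area constant $A_{|\delta|} = |\delta|\,\omega_{|\delta|}$ (with $\omega_k = \pi^{k/2}/\Gamma(k/2+1)$ the volume of the unit $k$-ball), together with the radial factor $s_\delta^{|\delta|-1}$. This collapses the problem to the scalar integral $\int_{\{s_\delta \ge 0,\ \sum_\delta s_\delta \le r\}} \prod_\delta s_\delta^{|\delta|-1}\, ds$ over a scaled simplex, which is a Dirichlet integral equal to $\frac{\prod_\delta \Gamma(|\delta|)}{\Gamma(n+1)}\,r^n = \frac{\prod_\delta (|\delta|-1)!}{n!}\,r^n$ with $n = \sum_\delta |\delta|$.

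The last step is purely algebraic: multiplying the two Jacobian factors, the surface-area constants, and the Dirichlet value, and using the identities $A_{|\delta|}(|\delta|-1)! = |\delta|!\,\omega_{|\delta|}$ and $\omega_{|\delta|} = \pi^{|\delta|/2}/\Gamma(|\delta|/2+1)$, reassembles the stated right-hand side. I expect the evaluation of the Dirichlet integral (or, equivalently, an induction on the number of domains that peels off one simplex direction at a time) to be the only genuinely non-routine ingredient; everything else is a matter of tracking constants. The single point I would double-check is the power to which each domain weight $w_\delta$ is raised by the block scaling, since the radial Jacobian in an $|\delta|$-dimensional domain naturally produces a factor $w_\delta^{|\delta|}$, and one must confirm how this interacts with the normalisation $\sum_{\delta\in\Delta} w_\delta = |\Delta|$ when matching the closed form.
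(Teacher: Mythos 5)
Your proposal is correct and follows essentially the same route as the paper: per-domain polar coordinates reduce the unweighted volume to angular constants times the simplex integral $\int \prod_{\delta\in\Delta} s_\delta^{|\delta|-1}\,ds$, which the paper evaluates via Lemmas \ref{lemma:IntegralBetaFunction} and \ref{lemma:RadiusIntegral} --- an iterated Beta-function reduction that is precisely your Dirichlet integral --- while the weights contribute the scaling factor $\prod_{d\in D}\bigl(w_{\delta(d)}\sqrt{w_d}\bigr)^{-1}$, which you introduce at the outset by a change of variables and the paper applies at the end by a stretching argument. The point you flagged resolves in your favour: each $w_\delta$ indeed enters with exponent $|\delta|$, since the denominator in the proposition is to be read as $\prod_{\delta\in\Delta}\prod_{d\in\delta} w_\delta\sqrt{w_d}$ (as the appendix proof writes it explicitly), and the normalisations $\sum_{\delta\in\Delta} w_\delta=|\Delta|$ and $\sum_{d\in\delta} w_d=1$ play no role in the closed form.
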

\begin{proof}
See Appendix A. \qed
\end{proof}

Defining $\delta(d)$ as the unique $\delta \in \Delta$ with $d \in \delta$, and $a_d := w_{\delta(d)} \cdot \sqrt{w_{d}} \cdot b_d \cdot c$, we can use Proposition \ref{proposition:hyperball} to rewrite $V(\widetilde{C}^{\alpha})$:
\begin{align*}
V(\widetilde{C}^\alpha) &=  
\frac{1}{c^n\prod_{d \in D} w_{\delta(d)} \sqrt{w_d}}
\sum_{i=0}^{n} \left( 
\frac{(-1)^i \cdot \ln\left(\frac{\alpha}{\mu_0}\right)^i}{i!} \cdot 
\sum_{\substack{\{d_1,\dots,d_i\}\\ \subseteq D}} 
\left(\prod_{\substack{d \in \\D \setminus \{d_1,\dots,d_i\}}} a_d\right) \cdot \right.\\
&\left.\hspace{4.5cm}\prod_{\substack{\delta \in \\ \Delta_{\{d_1,\dots,d_i\}}}} \left(
|\delta|! \cdot \frac{\pi^{\frac{|\delta|}{2}}}{\Gamma\left(\frac{|\delta|}{2}+1\right)}\right)\right)
\end{align*}
We can solve Equation \ref{eqn:integral} to compute $M(\widetilde{C})$ by using the following lemma:

\begin{lemma}
\label{lemma:logIntegral}
$\forall n \in \mathbb{N}: \int_0^{1} \ln(x)^n dx = (-1)^n \cdot n!$ 
\end{lemma}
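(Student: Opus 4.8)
The plan is to reduce this integral to a standard Gamma-function evaluation by means of the substitution $x = e^{-t}$. Under this change of variables we have $\ln(x) = -t$ and $dx = -e^{-t}\,dt$, and the limits $x = 0$ and $x = 1$ map to $t = \infty$ and $t = 0$, respectively. After substituting and reversing the orientation of the integral, one obtains
$$\int_0^1 \ln(x)^n\,dx = \int_0^{\infty} (-t)^n e^{-t}\,dt = (-1)^n \int_0^{\infty} t^n e^{-t}\,dt.$$
I would then invoke the classical identity $\int_0^{\infty} t^n e^{-t}\,dt = \Gamma(n+1) = n!$, which immediately delivers the claimed value $(-1)^n \cdot n!$.

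The main subtlety is that the original integral is improper, since the integrand $\ln(x)^n$ is unbounded as $x \to 0^+$. I would therefore phrase the computation as the limit $\lim_{\varepsilon \to 0^+} \int_\varepsilon^1 \ln(x)^n\,dx$ and verify that the transformed integral $\int_0^{T} t^n e^{-t}\,dt$ converges as $T \to \infty$. Because the exponential factor $e^{-t}$ dominates the polynomial factor $t^n$ for large $t$, this convergence is immediate, and the substitution is fully legitimate.

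A self-contained alternative is induction on $n$ via integration by parts. The base case $n = 0$ gives $\int_0^1 1\,dx = 1 = (-1)^0 \cdot 0!$. For the inductive step, writing $I_n := \int_0^1 \ln(x)^n\,dx$ and choosing $u = \ln(x)^n$, $dv = dx$ yields $I_n = \bigl[x\ln(x)^n\bigr]_0^1 - n\int_0^1 \ln(x)^{n-1}\,dx$. The boundary term vanishes at both endpoints — at $x = 1$ because $\ln 1 = 0$, and at $x = 0$ because $x\ln(x)^n \to 0$ — leaving the recurrence $I_n = -n\,I_{n-1}$, which together with the induction hypothesis $I_{n-1} = (-1)^{n-1}(n-1)!$ produces $I_n = (-1)^n n!$.

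I expect the only genuine obstacle, common to both routes, to be the endpoint analysis at $x = 0$: establishing $x\ln(x)^n \to 0$ (equivalently, the integrability of $\ln(x)^n$ near the origin). This is a standard dominance argument, provable by L'Hôpital's rule or directly by the substitution above, and everything else in the proof is routine.
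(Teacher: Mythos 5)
Your main argument is correct and is essentially the paper's own proof: your single substitution $x = e^{-t}$ is just the composition of the paper's two steps ($x = e^t$ followed by $s = -t$), reducing the integral to $\int_0^\infty t^n e^{-t}\,dt = \Gamma(n+1) = n!$. Your additional care with the improper endpoint at $x=0$ and the alternative induction via integration by parts are both sound, but they elaborate rather than depart from the paper's one-line sketch.
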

\begin{proof}
Substitute $x = e^t$ and $s = -t$, apply the definition of the $\Gamma$ function. \qed
\end{proof}

\begin{proposition}
\label{proposition:Measure}
The measure of a fuzzified cuboid $\widetilde{C}$ can be computed as follows:
\begin{align*}
M(\widetilde{C}) &= \frac{\mu_0}{c^n\prod_{d \in D} w_{\delta(d)} \sqrt{w_d}}
\sum_{i=0}^{n} \Bigg( 
\sum_{\substack{\{d_1,\dots,d_i\}\\ \subseteq D}} 
\left(\prod_{\substack{d \in \\ D \setminus \{d_1,\dots,d_i\}}} a_d\right) \cdot\\
&\hspace{5cm}\prod_{\substack{\delta \in\\ \Delta_{\{d_1,\dots,d_i\}}}} \left(
|\delta|! \cdot \frac{\pi^{\frac{|\delta|}{2}}}{\Gamma\left(\frac{|\delta|}{2}+1\right)}\right)\Bigg)
\end{align*}
\end{proposition}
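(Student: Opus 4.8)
The plan is to integrate the closed form for $V(\widetilde{C}^\alpha)$ derived just above over $\alpha$, using Equation \ref{eqn:integral} in the form $M(\widetilde{C}) = \int_0^1 V(\widetilde{C}^\alpha)\, d\alpha$. The key observation is that almost every factor in the expression for $V(\widetilde{C}^\alpha)$ is independent of $\alpha$: the leading normalization $\frac{1}{c^n \prod_{d \in D} w_{\delta(d)}\sqrt{w_d}}$, the extrusion products $\prod_{d} a_d$, and the per-domain ball factors $|\delta|! \cdot \pi^{|\delta|/2}/\Gamma(|\delta|/2 + 1)$ all carry no $\alpha$. Hence I can pull the integral past all of these, so that it acts only on the single $\alpha$-dependent term $\frac{(-1)^i}{i!}\ln(\alpha/\mu_0)^i$ inside the outer sum.

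First I would restrict the domain of integration. Since the maximal membership of $\widetilde{C}$ is $\mu_0$, the $\alpha$-cut $\widetilde{C}^\alpha$ is empty (and thus $V(\widetilde{C}^\alpha) = 0$) whenever $\alpha > \mu_0$; equivalently, the radius $-\frac{1}{c}\ln(\alpha/\mu_0)$ appearing in the $\epsilon$-neighborhood characterization is non-negative only for $\alpha \le \mu_0$. Therefore $\int_0^1 V(\widetilde{C}^\alpha)\, d\alpha = \int_0^{\mu_0} V(\widetilde{C}^\alpha)\, d\alpha$, and the remaining task reduces to evaluating $\int_0^{\mu_0} \ln(\alpha/\mu_0)^i\, d\alpha$ for each $i$.

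Next I would evaluate this integral by the substitution $\beta = \alpha/\mu_0$ (so $d\alpha = \mu_0\, d\beta$), which maps the limits $0$ and $\mu_0$ to $0$ and $1$ and yields $\mu_0 \int_0^1 \ln(\beta)^i\, d\beta$. Applying Lemma \ref{lemma:logIntegral} gives $\int_0^{\mu_0}\ln(\alpha/\mu_0)^i\, d\alpha = \mu_0 \cdot (-1)^i \cdot i!$. Substituting this back, the $\alpha$-dependent coefficient of each summand becomes $\frac{(-1)^i}{i!} \cdot \mu_0 \cdot (-1)^i \cdot i! = \mu_0$, since $(-1)^{2i} = 1$ and the factorials cancel. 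This collapses the $i$-dependent prefactor to the constant $\mu_0$, which factors out of the outer sum and combines with the leading normalization to produce exactly the claimed closed form.

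The proof is essentially a single integration, so no step is genuinely hard; the only place to be careful is the upper limit of integration. A naive reader might carry the integral all the way to $\alpha = 1$ and then worry that $\ln(\alpha/\mu_0)$ changes sign at $\alpha = \mu_0$ and that the formula for $V(\widetilde{C}^\alpha)$ (with its then-negative radius) no longer describes a genuine volume. Making explicit that $V(\widetilde{C}^\alpha)$ vanishes on $(\mu_0, 1]$, so that the substitution legitimately terminates at $\beta = 1$, is the one point that needs attention before invoking Lemma \ref{lemma:logIntegral}.
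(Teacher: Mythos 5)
Your proof is correct and follows essentially the same route as the paper, whose entire argument is the substitution $x = \alpha/\mu_0$ in Equation \ref{eqn:integral} followed by Lemma \ref{lemma:logIntegral}. Your explicit observation that $V(\widetilde{C}^\alpha)$ vanishes for $\alpha \in (\mu_0, 1]$, so that the integral legitimately truncates at $\alpha = \mu_0$ and the substituted limits become $[0,1]$, is a detail the paper's one-line proof leaves implicit, and it is exactly the right point to be careful about.
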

\begin{proof}
Substitute $x = \frac{\alpha}{\mu_0}$ in Equation \ref{eqn:integral} and apply Lemma \ref{lemma:logIntegral}. \qed
\end{proof}

Note that although the formula for $M(\widetilde{C})$ is quite complex, it can be easily implemented via a set of nested loops. As mentioned earlier, we can use the result from Proposition \ref{proposition:Measure} in combination with the inclusion-exclusion formula (Equation \ref{eqn:inclusionExclusion}) to compute $M(\widetilde{S})$ for any concept $\widetilde{S}$. Also Equation \ref{eqn:inclusionExclusion} can be easily implemented via a set of nested loops.
Note that $M(\widetilde{S})$ is always computed only on $\Delta_S$, i.e., the set of domains on which $\widetilde{S}$ is defined.

\subsection{Subsethood}
\label{Extension:Subsethood}

In order to represent knowledge about a hierarchy of concepts, one needs to be able to determine whether one concept is a subset of another concept. For instance, the fact that $\widetilde{S}_{Granny Smith} \subseteq \widetilde{S}_{apple}$ indicates that Granny Smith is a hyponym of apple.

The classic definition of subsethood for fuzzy sets reads as follows:
$$\widetilde{S}_1 \subseteq \widetilde{S}_2 :\iff \forall {x \in CS}: \mu_{\widetilde{S}_1}(x) \leq \mu_{\widetilde{S}_2}(x)$$
We can derive the following set of necessary and jointly sufficient conditions for subsethood of concepts:

\begin{proposition}
\label{CF:proposition:CrispSubsethood}
Let $\widetilde{S}_1 = \langle S_1, \mu_0^{(1)}, c^{(1)}, W^{(1)}\rangle, \widetilde{S}_2 = \langle S_2, \mu_0^{(2)}, c^{(2)}, W^{(2)}\rangle$ be two concepts. Then, $\widetilde{S}_1 \subseteq \widetilde{S}_2$ if and only if the following conditions are true:
\begin{align*}
	\Delta_{S_2} \subseteq \Delta_{S_1} &\text{ and } \mu_0^{(1)} \leq \mu_0^{(2)} \text{ and } S_1 \subseteq {\widetilde{S}_2}^{\mu_0^{(1)}} \\
&\text{ and }\forall d \in D_{S_2}: c^{(1)} \cdot w^{(1)}_{\delta(d)} \cdot \sqrt{w^{(1)}_d} \geq c^{(2)} \cdot w^{(2)}_{\delta(d)} \cdot \sqrt{w^{(2)}_d}
\end{align*}
\end{proposition}
\begin{proof}
See Appendix B. \qed
\end{proof}

Please note that $\widetilde{S}_1 \subseteq \widetilde{S}_2$ either is true or false -- which is a binary decision.  It is however desirable to define a \emph{degree} of subsethood in order to make more fine-grained distinctions. 
Many of the definitions for degrees of subsethood proposed in the fuzzy set literature \cite{Bouchon-Meunier1996} require that the underlying universe is discrete. The following definition \cite{Kosko1992} works also in a continuous space and is conceptually quite straightforward: 
$$Sub(\widetilde{S}_1,\widetilde{S}_2) := \frac{M(\widetilde{S}_1 \cap \widetilde{S}_2)}{M(\widetilde{S}_1)} \quad \text{with a measure } M$$
One can interpret this definition intuitively as the ``percentage of $\widetilde{S}_1$ that is also in $\widetilde{S}_2$''. It can be easily implemented based on the measure defined in Section \ref{Extension:Hypervolume} and the intersection defined in \cite{Bechberger2017KI}. If $\widetilde{S}_1$ and $\widetilde{S}_2$ are not defined on the same domains, then we first project them onto their shared subset of domains before computing their degree of subsethood.

When computing the intersection of two concepts with different sensitivity parameters $c^{(1)}, c^{(2)}$ and different weights $W^{(1)}, W^{(2)}$, one needs to define new parameters $c'$ and $W'$ for the resulting concept. In our earlier work \cite{Bechberger2017KI}, we have argued that the sensitivity parameter $c'$ should be set to the minimum of $c^{(1)}$ and $c^{(2)}$. As a larger value for $c$ causes the membership function to drop faster, this means that the concept resulting from intersecting two imprecise concepts is at least as imprecise as the original concepts. Moreover, we defined the set of new salience weights $W'$ as a linear interpolation between $W^{(1)}$ and $W^{(2)}$.

Now if $c^{(1)} > c^{(2)}$, then $c' = \min(c^{(1)}, c^{(2)}) = c^{(2)} < c^{(1)}$. It might thus happen that $M(\widetilde{S}_1 \cap \widetilde{S}_2) > M(\widetilde{S}_1)$, and that therefore $Sub(\widetilde{S}_1,\widetilde{S}_2) > 1$. As we would like to confine $Sub(\widetilde{S}_1,\widetilde{S}_2)$ to the interval $[0,1]$, we should use the same $c$ and $W$ for computing both $M(\widetilde{S}_1 \cap \widetilde{S}_2)$ and $M(\widetilde{S}_1)$. 

When judging whether $\widetilde{S}_1$ is a subset of $\widetilde{S}_2$, we can think of $\widetilde{S}_2$ as setting the context by determining the relative importance of the different domains and dimensions as well as the degree of fuzziness. For instance, when judging whether tomatoes are vegetables, we focus our attention on the features that are crucial to the definition of the ``vegetable'' concept. We thus propose to use $c^{(2)}$ and $W^{(2)}$ when computing $M(\widetilde{S}_1 \cap \widetilde{S}_2)$ and $M(\widetilde{S}_1)$.

Please note that this ensures that $Sub(\widetilde{S}_1, \widetilde{S}_2) \in [0,1]$ holds. However, by using the same $c$ and $W$ for both $M(\widetilde{S}_1 \cap \widetilde{S}_2)$ and $M(\widetilde{S}_1)$, we lose the guarantee that $Sub(\widetilde{S}_1, \widetilde{S}_2) = 1.0 \Rightarrow \widetilde{S}_1 \subseteq \widetilde{S}_2$: Let $\widetilde{S} = \langle S, \mu_0, c, W \rangle$ be a concept and define $\widetilde{S}' := \langle S, \mu_0, c', W \rangle$ with $c' < c$. One can easily see that $Sub(\widetilde{S}',\widetilde{S}) = 1.0$, but that $\widetilde{S}' \not\subseteq \widetilde{S}$.
In practical applications, however, this is not necessarily a problem: If $Sub(\widetilde{S}_1, \widetilde{S}_2) = 1.0$, we can use Proposition \ref{CF:proposition:CrispSubsethood} to decide whether $\widetilde{S}_1 \subseteq \widetilde{S}_2$.

\subsection{Implication}
\label{Extension:Implication}

Implications play a fundamental role in rule-based systems and all approaches that use formal logics for knowledge representation. It is therefore desirable to define an implication function on concepts, such that one is able to express facts like $apple \Rightarrow red$ within our formalization.

In the fuzzy set literature \cite{Mas2007}, a fuzzy implication is defined as a generalization of the classical crisp implication. Computing the implication of two fuzzy sets typically results in a new fuzzy set which describes the local validity of the implication for each point in the space. In our setting, we are however more interested in a single number that indicates the overall validity of the implication $apple \Rightarrow red$.
We propose to reuse the definition of subsethood from Section \ref{Extension:Subsethood}: It makes intuitive sense in our geometric setting to say that $apple \Rightarrow red$ is true to the degree to which $apple$ is a subset of $red$. We define:
$$Impl(\widetilde{S}_1,\widetilde{S}_2) := Sub(\widetilde{S}_1,\widetilde{S}_2)$$

\subsection{Similarity}
\label{Extension:Similarity}

The similarity relation of concepts can be a valuable source for common-sense reasoning: If two concepts are similar, they are expected to have similar properties and behave in similar ways (e.g., pencils and crayons).

Whenever two concepts are defined on two different sets of domains $\Delta_1 \neq \Delta_2$, we first project them onto their set of common domains $\Delta' = \Delta_1 \cap \Delta_2$ before computing their similarity value. We define their similarity to be 0 in case that $\Delta' = \emptyset$. This is motivated by the following example: The conceptual similarity of baseball and apple should not be zero, as both have similar shapes and sizes. However, while apples have a taste, the baseball concept does not include the taste domain. Thus, when judging the similarity of ``baseball'' and ``apple'', we consider only their set of common domains.

\begin{definition}
\label{def:Similarity}
A function $Sim(\widetilde{S}_1, \widetilde{S}_2) \in [0,1]$ is called a similarity function, if it has the following properties for all concepts $\widetilde{S}_1, \widetilde{S}_2$:
\begin{enumerate}
	\item $Sim(\widetilde{S}_1, \widetilde{S}_2) = 1.0 \Rightarrow Sub(\widetilde{S}_1, \widetilde{S}_2) = 1.0$
	\item $\widetilde{S}_1 = \widetilde{S}_2 \Rightarrow Sim(\widetilde{S}_1, \widetilde{S}_2) = 1.0$
	\item $\widetilde{S}_1 \subseteq \widetilde{S}_2 \Rightarrow Sim(\widetilde{S}_1, \widetilde{S}_2) \geq Sim(\widetilde{S}_2, \widetilde{S}_1)$
\end{enumerate}
\end{definition}
The first property in Defintion \ref{def:Similarity} links perfect similarity to a strong semantic relationship (namely, subsethood) between the two concepts.
The second property requires that the similarity of a given concept to itself is always maximal.
The third property finally prevents supersets from having a higher similarity to their subsets than the other way around.\\

If we base our definition of similarity on the distance between the two concepts' cores (e.g., by computing their Hausdorff distance as proposed by \cite{Aisbett2001}, or the distance of their prototypical points as proposed by \cite{Adams2009}), we always violate the first property: Consider $\widetilde{S} = \langle S, \mu_0, c, W \rangle$ and $\widetilde{S}' = \langle S, \mu_0', c, W \rangle$ with $\mu'_0 < \mu_0$. Clearly, $Sub(\widetilde{S}, \widetilde{S}') < 1.0$. However, as the cores are identical, their distance is zero. If we use $Sim(\widetilde{S}_1, \widetilde{S}_2) = e^{-c \cdot d(S_1, S_2)}$, then $Sim(\widetilde{S}, \widetilde{S}') =1.0$, but $Sub(\widetilde{S}, \widetilde{S}') < 1.0$. 

Also if we define $Sim(\widetilde{S}_1, \widetilde{S}_2) = \max_{x \in S_1} \mu_{\widetilde{S}_2}(x)$ (or analogously with $\min$), we automatically violate the second property for $\widetilde{S}_1 = \widetilde{S}_2$ with $\mu_0 < 1.0$. 
We therefore exclude these potential definitions from our consideration.\\

The following two definitions fulfill all of our requirements:
\begin{proposition}
$Sim_S(\widetilde{S}_1, \widetilde{S}_2) := Sub(\widetilde{S}_1, \widetilde{S}_2)$ is a similarity function according to Definition \ref{def:Similarity}.
\end{proposition}
\begin{proof}
See Appendix C. \qed
\end{proof}

\begin{proposition}
$Sim_J(\widetilde{S}_1, \widetilde{S}_2) := \frac{M(\widetilde{S}_1 \cap \widetilde{S}_2)}{M(\widetilde{S}_1 \cup \widetilde{S}_2)}$ is a similarity function according to Definition \ref{def:Similarity}.
\end{proposition}
\begin{proof}
See Appendix C. \qed
\end{proof}

$Sim_S$ simply reuses our definition of subsethood from Section \ref{Extension:Subsethood} and $Sim_J$ is an implementation of the Jaccard index.
Both proposed definitions are similar in the sense that they look at the overall fuzzy sets and not just at their cores. The symmetric nature of the Jaccard index $Sim_J$ might be more convincing from a mathematical perspective. On the other hand, the asymmetric nature of $Sim_S$ matches psychological evidence suggesting that similarity judgements by humans tend to be asymmetric \cite{Tversky1977}.

\subsection{Betweenness}
\label{Extension:Betweenness}

Conceptual betweenness can be a valuable source for common-sense reasoning \cite{Derrac2015}: If one concept (e.g., ``master student'') is conceptually between two other concepts (e.g., ``bachelor student'' and ``PhD student''), then it is expected to share all properties and behaviors that the two other concepts have in common (e.g., having to pay an enrollment fee).

\cite{Derrac2014,Derrac2015,Schockaert2011a} have thoroughly studied betweenness in conceptual spaces as a basis for common-sense reasoning. They argue that betweenness is invariant under changes in context, which are typically reflected by changes of the salience weights in a conceptual space. \cite{Schockaert2011a} generalize the crisp betweenness relation from points to regions. \cite{Derrac2014} propose different soft notions of betweenness for points and subsequently generalize them to regions as well. However, they assume that each region can be described by a finite set of points. As the concepts in our formalization cannot be described by a finite set of points, their definitions are not directly applicable.\\

Please note that a concept $\widetilde{S}_2$ can only be between two other concepts $\widetilde{S}_1$ and $\widetilde{S}_3$ if all of these concepts are defined on the same domains. One can for instance not say that ``baseball'' is conceptually between ``apple'' and ``orange'', because it does not have a taste. 

As stated in Section \ref{CS:DimensionsDomainsDistance}, we can define that a point $y$ is between two other points $x$ and $z$ like this:
$$B(x,y,z) :\Leftrightarrow d(x,y) + d(y,z) = d(x,z)$$
\cite{Schockaert2011a} generalize $B(x,y,z)$ from points to sets:
$$B(S_1, S_2, S_3) :\Leftrightarrow \forall {y \in S_2}: \exists {x \in S_1}: \exists {z \in S_3}: B(x,y,z)$$
In order to generalize from crisp to fuzzy sets, we can simply require that $B(\widetilde{S}_1^{\alpha}, \widetilde{S}_2^{\alpha}, \widetilde{S}_3^{\alpha})$ is true for all $\alpha$-cuts:
\begin{align*}
B(\widetilde{S}_1, \widetilde{S}_2, \widetilde{S}_3) :&\Leftrightarrow \forall {\alpha \in [0,1]}: B(\widetilde{S}_1^{\alpha}, \widetilde{S}_2^{\alpha}, \widetilde{S}_3^{\alpha})\\
&\Leftrightarrow \forall {\alpha \in [0,1]}: \forall {y \in \widetilde{S}_2^{\alpha}}: \exists {x \in \widetilde{S}_1^{\alpha}}: \exists {z \in \widetilde{S}_3^{\alpha}}: B(x,y,z)
\end{align*}
If $\widetilde{S}_2^{\alpha} = \emptyset$, then $B(\widetilde{S}_1^{\alpha}, \widetilde{S}_2^{\alpha}, \widetilde{S}_3^{\alpha})$ is true independent of $\widetilde{S}_1^{\alpha}$ and $\widetilde{S}_3^{\alpha}$. If $\widetilde{S}_2^{\alpha} \neq \emptyset$, but $\widetilde{S}_1^{\alpha} = \emptyset$ or $\widetilde{S}_3^{\alpha} = \emptyset$, then $B(\widetilde{S}_1^{\alpha}, \widetilde{S}_2^{\alpha}, \widetilde{S}_3^{\alpha})$ is false, and therefore also $B(\widetilde{S}_1, \widetilde{S}_2, \widetilde{S}_3)$.

This definition is binary and thus allows only for relatively coarse-grained distinctions. In order to derive a \emph{degree} of betweenness for fuzzy sets, we use the following soft notion of betweenness for points \cite{Derrac2014}:
$$B_{soft}(x,y,z) := \frac{d(x,z)}{d(x,y) + d(y,z)}$$
Please note that $B(x,y,z) \Leftrightarrow B_{soft}(x,y,z) = 1.0$ and that $B_{soft}(x,y,z) \in [0,1]$. 

We can use $B_{soft}(x,y,z)$ together with the extension principle \cite{Zadeh1975} to generalize $B(\widetilde{S}_1, \widetilde{S}_2, \widetilde{S}_3)$ to a soft notion $B_{soft}(\widetilde{S}_1, \widetilde{S}_2, \widetilde{S}_3)$:
$$B_{soft}(\widetilde{S}_1, \widetilde{S}_2, \widetilde{S}_3) := \min_{\alpha \in [0,1]} \min_{y \in \widetilde{S}_2^{\alpha}} \max_{x \in \widetilde{S}_1^{\alpha}} \max_{z \in \widetilde{S}_3^{\alpha}} B_{soft}(x,y,z)$$
We simply replaced $B(x,y,z)$ with $B_{soft}(x,y,z)$, each existence quantor with a $\max$, and each all quantor with a $\min$. One can easily see that $B(\widetilde{S}_1, \widetilde{S}_2, \widetilde{S}_3) \Leftrightarrow B_{soft}(\widetilde{S}_1, \widetilde{S}_2, \widetilde{S}_3) = 1.0$.
Moreover, if $\widetilde{S}_2 \subseteq \widetilde{S}_1$, then $B_{soft}(\widetilde{S}_1, \widetilde{S}_2, \widetilde{S}_3) = 1.0$ as we can pick for each $y \in \widetilde{S}_2^{\alpha}$ always $x = y \in \widetilde{S}_1^{\alpha}$.

\begin{figure}[tp]
\centering
\includegraphics[width=\textwidth]{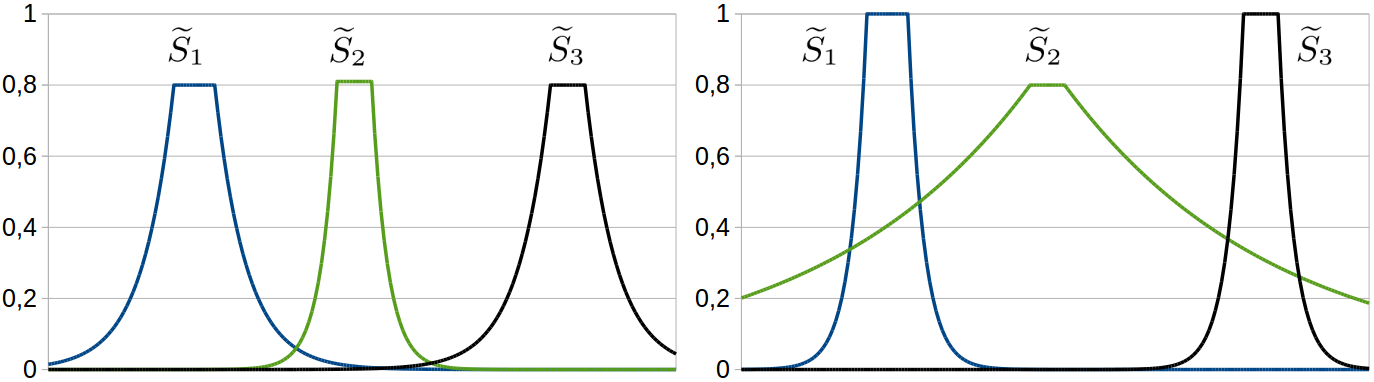}\\
(a)\hspace{5.5cm}(b)

\caption{Two problematic cases for $B_{soft}(\widetilde{S}_1, \widetilde{S}_2, \widetilde{S}_3)$.}
\label{fig:BetweennessProblems}
\end{figure}

Unfortunately, this definition yields some unintutive results: If we consider the three concepts in Figure \ref{fig:BetweennessProblems}a, where $\mu_0^{(1)} = \mu_0^{(3)} = 0.80$ and $\mu_0^{(2)} = 0.81$, then we get that $B_{soft}(\widetilde{S}_1, \widetilde{S}_2, \widetilde{S}_3) = 0.0$, as for $\alpha = 0.81$, $\widetilde{S}_1^{\alpha} = \widetilde{S}_3^{\alpha} = \emptyset$. Also in the example shown in Figure \ref{fig:BetweennessProblems}b, we get $B_{soft}(\widetilde{S}_1, \widetilde{S}_2, \widetilde{S}_3) = 0.0$: For $\alpha \rightarrow 0$, $\widetilde{S}_2^{\alpha}$ becomes much larger than $\widetilde{S}_1^{\alpha}$ and $\widetilde{S}_3^{\alpha}$ and we can select a point $y \in \widetilde{S}_2^{\alpha}$ for which $B_{soft}(x,y,z)$ becomes arbitrarily small independent of the choices for $x$ and $z$.

In order to achieve a more generous degradation in cases like the ones depicted in Figure \ref{fig:BetweennessProblems}, we propose to aggregate over the $\alpha$-cuts not by taking the minimum, but through an integration. As we integrate over $\alpha$ in the interval $[0,1]$ and as the degree of betweenness computed for each $\alpha$-cut also lies in the interval $[0,1]$, the result of this integration is also a number between zero and one. We define:
$$B_{soft}^{integral}(\widetilde{S}_1, \widetilde{S}_2, \widetilde{S}_3) := \int_0^1 \min_{y \in \widetilde{S}_2^{\alpha}} \max_{x \in \widetilde{S}_1^{\alpha}} \max_{z \in \widetilde{S}_3^{\alpha}} \frac{d(x,z)}{d(x,y) + d(y,z)} \; d\alpha$$

All of the properties discovered above for $B_{soft}(\widetilde{S}_1, \widetilde{S}_2, \widetilde{S}_3)$ still hold for $B_{soft}^{integral}(\widetilde{S}_1, \widetilde{S}_2, \widetilde{S}_3)$. However, in the examples from Figure \ref{fig:BetweennessProblems} we do not get the unintuitive result of $B_{soft}^{integral}(\widetilde{S}_1, \widetilde{S}_3, \widetilde{S}_3) = 0.0$.

%================================================================================================================================================================%
\section{Illustrative Example}
\label{Example}

%---------------------------------------------------------------------------------------------------------------%
\subsection{A Conceptual Space and its Concepts}
\label{Example:Definition}

We consider a very simplified conceptual space for fruits, consisting of the following domains and dimensions:
$$\Delta = \left\{ \delta_{color} = \left\{d_{hue}\right\},\delta_{shape} = \left\{d_{round}\right\},\delta_{taste} = \left\{d_{sweet}\right\} \right\}$$
$d_{hue}$ describes the hue of the observation's color, ranging from $0.00$ (purple) to $1.00$ (red). $d_{round}$ measures the percentage to which the bounding circle of an object is filled. $d_{sweet}$ represents the relative amount of sugar contained in the fruit, ranging from 0.00 (no sugar) to 1.00 (high sugar content). As all domains are one-dimensional, the dimension weights $w_{d}$ are always equal to 1.00 for all concepts. We assume that the dimensions are ordered like this: $d_{hue},d_{round},d_{sweet}$. Table \ref{tab:FruitSpace} defines several concepts in this space and Figure \ref{fig:FruitSpace} visualizes them.

\begin{table}[t]
  \centering
  \caption{Definitions of several concepts.}
  \begin{tabular}{|l||c|c|c|c|c|c|c|c|}
    \hline
    Concept 	& $\Delta_S$& $p^-$ 				& $p^+$ 				& $\mu_0$ 	& $c$ 	& \multicolumn{3}{|c|}{$W$}\\ %\hline 
    & & & & & & $w_{\delta_{color}}$ & $w_{\delta_{shape}}$ & $w_{\delta_{taste}}$\\ \hline \hline
    Orange		& $\Delta$	& (0.80, 0.90, 0.60)	& (0.90, 1.00, 0.70)	& 1.0		& 15.0	& 1.00 & 1.00 & 1.00 \\ \hline
    Lemon		& $\Delta$	& (0.70, 0.45, 0.00)	& (0.80, 0.55, 0.10)	& 1.0		& 20.0	& 0.50 & 0.50 & 2.00 \\ \hline
    Granny & \multirow{2}{*}{$\Delta$}	& \multirow{2}{*}{(0.55, 0.70, 0.35)}	& \multirow{2}{*}{(0.60, 0.80, 0.45)}	& \multirow{2}{*}{1.0}		& \multirow{2}{*}{25.0} & \multirow{2}{*}{1.00} & \multirow{2}{*}{1.00} &  \multirow{2}{*}{1.00}\\ 
    Smith & & & & & & & &\\ \hline
    \multirow{3}{*}{Apple}	& \multirow{3}{*}{$\Delta$}	& (0.50, 0.65, 0.35)	& (0.80, 0.80, 0.50)	& \multirow{3}{*}{1.0}		& \multirow{3}{*}{10.0}	& \multirow{3}{*}{0.50} & \multirow{3}{*}{1.50} &  \multirow{3}{*}{1.00} \\ %\hline
    			&			& (0.65, 0.65, 0.40)	& (0.85, 0.80, 0.55)	&			&		&  & & \\ %\hline
    			&			& (0.70, 0.65, 0.45)	& (1.00, 0.80, 0.60)	&			&		&  & & \\ \hline
    Red			& $\{\delta_{color}\}$ & (0.90, -$\infty$, -$\infty$) & (1.00, +$\infty$, +$\infty$) & 1.0 & 20.0 & 1.00 & -- & -- \\ \hline
  \end{tabular}\\[1ex]
  \label{tab:FruitSpace}
\end{table}

\begin{figure}[t]
\centering
\includegraphics[width=\textwidth]{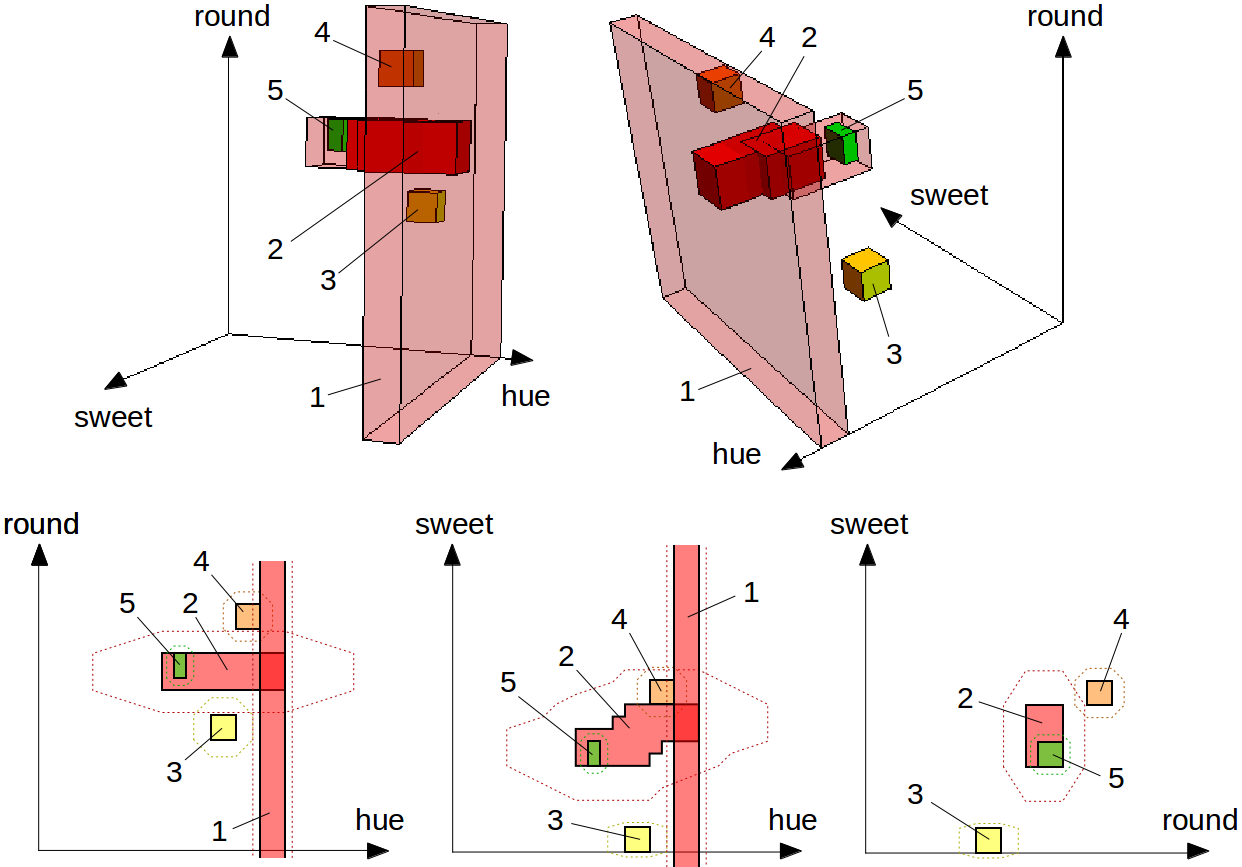}
\caption{Top: Three-dimensional visualization of the fruit space (only cores). Bottom: Two-dimensional visualizations of the fruit space (cores and 0.5-cuts). The concepts are labeled as follows: red (1), apple (2), lemon (3), orange (4), Granny Smith (5).}
\label{fig:FruitSpace}
\end{figure}

%---------------------------------------------------------------------------------------------------------------%
\subsection{Computations}
\label{Example:Computations}

\begin{table}[t]
  \centering
  \caption{Computations of different relations. Note that $Sub(\widetilde{S}_1, \widetilde{S}_2) = Impl(\widetilde{S}_1, \widetilde{S}_2) = Sim_S(\widetilde{S}_1, \widetilde{S}_2)$.}
  \begin{tabular}{|c|c||c|c|c|c|c|c|}
  \hline
  $\widetilde{S}_1$ & $\widetilde{S}_2$ & $M(\widetilde{S}_1)$ & $M(\widetilde{S}_2)$ & $Sub(\widetilde{S}_1, \widetilde{S}_2)$ & $Sub(\widetilde{S}_2, \widetilde{S}_1)$ & $Sim_J(\widetilde{S}_1, \widetilde{S}_2)$ & $Sim_J(\widetilde{S}_2, \widetilde{S}_1)$\Tstrut \\ \hline \hline
  Granny Smith 	& Apple 	& 0.0042 	& 0.1048 	& 1.0000 	& 0.1171 	& 0.2570 	& 0.2570 	\\ \hline
  Orange 		& Apple 	& 0.0127	& 0.1048	& 0.1800	& 0.0333	& 0.0414	& 0.0414	\\ \hline
  Lemon			& Apple		& 0.0135	& 0.1048	& 0.0422	& 0.0054	& 0.0073	& 0.0073	\\ \hline
  Red			& Apple		& 0.2000	& 0.1048	& 1.0000	& 0.3333	& 0.4286	& 0.4286	\\ \hline
  \end{tabular}\\[2ex]
  \begin{tabular}{|c|c|c||c|c|}
  \hline
  $\widetilde{S}_1$ & $\widetilde{S}_2$ & $\widetilde{S}_3$ & $B_{soft}(\widetilde{S}_1, \widetilde{S}_2, \widetilde{S}_3)$ & $B_{soft}^{integral}(\widetilde{S}_1, \widetilde{S}_2, \widetilde{S}_3)$ \Tstrut \\ \hline \hline
  Lemon			& Apple			& Orange	& 0.0000 	& 0.8623 \\ \hline
  Lemon			& Granny Smith	& Orange	& 0.8254	& 0.9161 \\ \hline
  Granny Smith	& Apple			& Red		& 0.0000	& 0.0000 \\ \hline
  Apple			& Granny Smith	& Orange	& 1.0000	& 1.0000 \\ \hline
  \end{tabular}\\[1ex]
  \label{tab:computations}
\end{table}

Table \ref{tab:computations} shows the results of using the definitions from Section \ref{Extension} on the concepts defined in Section \ref{Example:Definition}. Note that $M(\widetilde{S}_{lemon}) \neq M(\widetilde{S}_{orange})$ because the two concepts have different weights and different sensitivity parameters. Also all relations involving the property ``red'' tend to yield relatively high numbers -- this is because all computations only take place within the single domain on which ``red'' is defined. The numbers computed for the subsethood/implication relation nicely reflect our intuitive expectations. The numbers also nicely illustrate the difference between the asymmetric $Sim_S$ and the symmetric $Sim_J$ similarity functions. The values of $Sim_J(\widetilde{S}_1, \widetilde{S}_2)$ are always between $Sim_S(\widetilde{S}_1, \widetilde{S}_2)$ and $Sim_S(\widetilde{S}_2, \widetilde{S}_1)$ while tending towards the smaller of the two numbers. Please note that for none of the examples we get a value for $Sim_J$ that is close to one. The numbers derived for the two betweenness relations show that $B_{soft}^{integral}$ is always greater or equal than $B_{soft}$. They agree on borderline cases: When computing the betweenness of concepts defined on different sets of domains (e.g., Granny Smith, apple, and red) they return zero, and when the middle concept is a subset of one of the outer concepts (e.e.g, apple, Granny Smith, and orange), they return one. The first example (lemon, apple, orange) shows the more generous degradation of $B_{soft}^{integral}$ in a case similar to the one from Figure \ref{fig:BetweennessProblems}b.

%================================================================================================================================================================%
\section{Related Work}
\label{RelatedWork}

Our work is of course not the first attempt to devise an implementable formalization of the conceptual spaces framework.

%Aisbett2001
An early formalization was done by \cite{Aisbett2001}. Like we, they consider concepts to be regions in the overall conceptual space. However, they stick with the assumption of convexity and do not define concepts in a parametric way. The only operations they provide are distance and similarity of points and regions. Their formalization targets the interplay of symbols and geometric representations, but it is too abstract to be implementable. 
% dist & sim of objects & regions (Hausdorff); classification

%Rickard2006
\cite{Rickard2006} provides a formalization based on fuzziness. He represents concepts as co-occurence matrices of their properties. By using some mathematical transformations, he interprets these matrices as fuzzy sets on the universe of ordered property pairs. Operations defined on these concepts include similarity judgements between concepts and between concepts and instances. Rickard's representation of correlations is not geometrical: He first discretizes the domains (by defining properties) and then computes the co-occurences between these properties. Depending on the discretization, this might lead to a relatively coarse-grained notion of correlation. Moreover, as properties and concepts are represented in different ways, one has to use different learning and reasoning mechanisms for them. His formalization is also not easy to work with due to the complex mathematical transformations involved.
% concept similarity & observation-concept similarity; (subsethood)

%Adams2009
\cite{Adams2009} represent concepts by one convex polytope per domain. This allows for efficient computations while being potentially more expressive than our cuboid-based approach. The Manhattan metric is used to combine different domains. However, correlations between different domains are not taken into account and cannot be expressed in this formalization as each convex polytope is only defined on a single domain. Adams and Raubal also define operations on concepts, namely intersection, similarity computation, and concept combination. This makes their formalization quite similar in spirit to ours. 
% intersectino of regions; membership point in region; distance within a domain; instance similarity, concept similarity; concept combination (property-concept, concept-concept, relativeProperty-concept)

%Lewis2016
\cite{Lewis2016} formalize conceptual spaces using random set theory. They define properties as random sets within single domains and concepts as random sets in a boolean space whose dimensions indicate the presence or absence of properties. In order to define this boolean space, a single property is taken from each domain. Their approach is similar to ours in using a distance-based membership function to a set of prototypical points. However, their work purely focuses on modeling conjunctive concept combinations and does not consider correlations between domains. 
% concept combination (prop-prop); conjunction of compound concepts (property-concept, concept-concept)

As one can see, none of the formalizations listed above provides a set of operations that is as comprehensive as the one offered by our extended formalization.
%================================================================================================================================================================%
\section{Conclusion and Future Work}
\label{Conclusion}

In this paper, we extended our previous formalization of the conceptual spaces framework by providing ways to measure relations between concepts: concept size, subsethood, implication, similarity, and betweenness. This considerably extends our framework's capabilities for representing knowledge and makes it (to the best of our knowledge) the most thorough and comprehensive formalization of conceptual spaces developed so far. A python implementation of our formalization is publicly available on GitHub under \url{https://www.github.com/lbechberger/ConceptualSpaces} \cite{Bechberger2018GitHub1.2}.

Our overall research goal is to use machine learning in conceptual spaces. Therefore, our future work will focus on putting this formalization to practical use in a machine learning context.

% The file named.bst is a bibliography style file for BibTeX 0.99c
\bibliographystyle{apalike}
\bibliography{/home/lbechberger/Documents/Papers/jabref.bib}

\appendix

\section{Concept Size}

\subsection{Hyperballs under the Unweighted Metric}
\label{Size:Hyperballs}

In general, a hyperball of radius $r$ around a point $p$ can be defined as the set of all points with a distance of at most $r$ to $p$:
$$H =  \{x \in CS \;|\;d(x,p) \leq r \}$$

\begin{figure}[tp]
\centering
\includegraphics[width=\textwidth]{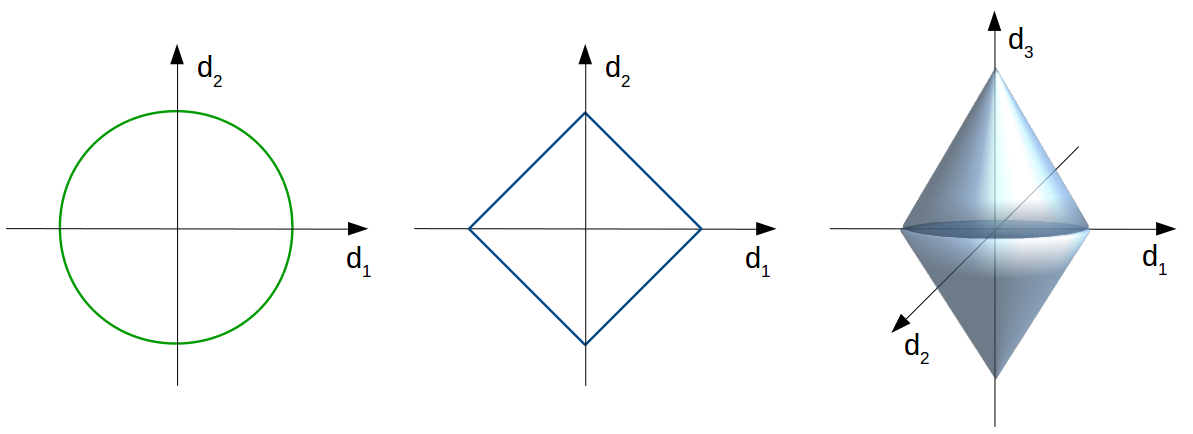}
\caption{Left: Two-dimensional hyperball under the Euclidean metric. Middle: Two-dimensional hyperball under the Manhattan metric. Right: Three-dimensional hyperball under the combined metric (with domain structure $\Delta = \{ \delta_1 = \{d_1, d_2\}, \delta_2 = \{d_3\}\}$).}
\label{fig:hyperballs}
\end{figure}

If the Euclidean distance $d_E$ is used, this corresponds to our intuitive notion of a ball -- a round shape centered at $p$. However, under the Manhattan distance $d_M$, hyperballs have the shape of diamonds. Under the combined distance $d_C^\Delta$, a hyperball in three dimensions has the shape of a double cone (cf. Figure \ref{fig:hyperballs}).

As similarity is inversely related to distance, one can interpret a hyperball in a conceptual space as the set of all points that have a minimal similarity $\alpha$ to the central point $p$, where $\alpha$ depends on the radius of the hyperball.

In this section, we assume an unweighted version of $d_C^\Delta$:
$$d_C^\Delta(x,y) = \sum_{\delta \in \Delta} \sqrt{\sum_{d \in \delta} |x_{d} - y_{d}|^2}$$

In order to derive a formula for the hypervolume of a hyperball under $d_C^\Delta$, we need to use the following three lemmata:

\begin{lemma}
\label{lemma:AngleIntegral}
$$\int_0^{2\pi} \int_0^\pi \int_0^\pi \dots \int_0^\pi \sin^{n-2}(\phi_1) \sin^{n-3}(\phi_2) \dots \sin(\phi_{n-2}) d\phi_1 \dots d\phi_{n-1} = 2 \cdot \frac{\pi^{\frac{n}{2}}}{\Gamma\left(\frac{n}{2}\right)}$$
where $\Gamma(\cdot)$ is Euler's Gamma function and $n \in \mathbb{N}$.
\end{lemma}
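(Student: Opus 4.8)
The plan is to recognize the left-hand side as the total angular integral arising in $n$-dimensional spherical coordinates, i.e. the surface area of the unit sphere $S^{n-1}$, and to evaluate it by exploiting the fact that it \emph{factors} into a product of one-dimensional integrals. The integrand $\sin^{n-2}(\phi_1)\sin^{n-3}(\phi_2)\cdots\sin(\phi_{n-2})$ is a product of functions, each depending on a single angle $\phi_j$, and the domain is the product $[0,\pi]^{n-2}\times[0,2\pi]$. Hence by Fubini's theorem the whole expression equals
$$\left(\prod_{j=1}^{n-2}\int_0^\pi \sin^{\,n-1-j}(\phi_j)\,d\phi_j\right)\cdot\int_0^{2\pi}d\phi_{n-1} = 2\pi\cdot\prod_{m=1}^{n-2}\int_0^\pi \sin^{m}(\phi)\,d\phi,$$
where in the last step I reindex the powers by $m=n-1-j$, so that $m$ runs from $1$ to $n-2$.

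Next I would insert the standard evaluation of the single-variable sine integral via the Beta--Gamma relation,
$$\int_0^\pi \sin^{m}(\phi)\,d\phi = 2\int_0^{\pi/2}\sin^{m}(\phi)\,d\phi = B\!\left(\tfrac{m+1}{2},\tfrac12\right) = \sqrt{\pi}\cdot\frac{\Gamma\!\left(\frac{m+1}{2}\right)}{\Gamma\!\left(\frac{m}{2}+1\right)}.$$
This is the one genuinely nontrivial ingredient; it can be proved either by repeated integration by parts (the Wallis recursion) or by the substitution $t=\sin^2\phi$ together with the definition of the Beta function. Substituting this into the product yields a factor $(\sqrt{\pi})^{\,n-2}=\pi^{(n-2)/2}$ times the product of the Gamma ratios $\Gamma\!\left(\frac{m+1}{2}\right)/\Gamma\!\left(\frac{m+2}{2}\right)$ for $m=1,\dots,n-2$, using $\Gamma(\frac{m}{2}+1)=\Gamma(\frac{m+2}{2})$.

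The crux of the argument is then a telescoping observation: the denominator $\Gamma\!\left(\frac{m+2}{2}\right)$ of the $m$-th factor is exactly the numerator of the $(m+1)$-th factor, so the product collapses to $\Gamma(1)/\Gamma(\frac{n}{2}) = 1/\Gamma(\frac{n}{2})$. Combining the pieces gives $2\pi\cdot\pi^{(n-2)/2}\cdot\frac{1}{\Gamma(n/2)} = \frac{2\pi^{n/2}}{\Gamma(n/2)}$, as claimed. I expect the only real obstacles to be the careful bookkeeping of the reindexing (making sure the exponents and Gamma arguments line up so the telescope is exact) and the derivation of the sine-power integral; everything else is mechanical. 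As an independent sanity check one can confirm the answer via the Gaussian integral $\int_{\mathbb{R}^n}e^{-|x|^2}\,dx = \pi^{n/2}$, which in spherical coordinates factors as this angular integral times $\int_0^\infty e^{-r^2}r^{n-1}\,dr = \tfrac12\Gamma(n/2)$, immediately reproducing $2\pi^{n/2}/\Gamma(n/2)$.
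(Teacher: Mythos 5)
Your proposal is correct and follows essentially the same route as the paper's proof: factor the integral by Fubini into one-dimensional sine-power integrals, evaluate each via the Beta function with second argument $\tfrac12$, convert to Gamma ratios, and telescope (the paper merely folds the $2\pi$ factor into $2\,B\!\left(\tfrac12,\tfrac12\right)$ so that the entire product collapses at once to $2\,\Gamma\!\left(\tfrac12\right)^{n}/\Gamma\!\left(\tfrac n2\right)$, whereas you carry $2\pi$ and $(\sqrt{\pi})^{\,n-2}$ separately). Your closing Gaussian-integral verification is a genuinely different argument, but since you offer it only as a sanity check, the core proof matches the paper's.
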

\begin{proof}
\begin{align*}
I &=\int_0^{2\pi} \int_0^\pi \int_0^\pi \dots \int_0^\pi \sin^{n-2}(\phi_1) \sin^{n-3}(\phi_2) \dots \sin(\phi_{n-2}) d\phi_1 \dots d\phi_{n-1}\\
&= \left(\int\displaylimits_{0}^{2\pi} 1 \;d\phi_{n - 1}\right) 
\left(\int\displaylimits_{0}^\pi \sin(\phi_{n-2}) \;d\phi_{n - 2}\right)
\cdots 
\left(\int\displaylimits_{0}^\pi \sin^{n-2}(\phi_{1}) \;d\phi_{1}\right)\\
&= \left(4 \cdot \int\displaylimits_{0}^{\frac{\pi}{2}} 1 \;d\phi_{n - 1}\right) 
\left(2 \cdot \int\displaylimits_{0}^{\frac{\pi}{2}} \sin(\phi_{n-2}) \;d\phi_{n-2}\right)
\cdots 
\left(2 \cdot \int\displaylimits_{0}^{\frac{\pi}{2}} \sin^{n-2}(\phi_{1}) \;d\phi_{1}\right)
\end{align*}
We can now use the definition of the Beta function, which is $$B(x,y) = 2 \cdot \int\displaylimits_{0}^{\frac{\pi}{2}} \sin^{2x-1}(\phi) \cos^{2y-1}(\phi)\; d\phi$$
Using $y = \frac{1}{2}$, we get:
\begin{align*}
I &= \left(4 \cdot \int\displaylimits_{0}^{\frac{\pi}{2}} 1 \;d\phi_{n - 1}\right) 
\left(2 \cdot \int\displaylimits_{0}^{\frac{\pi}{2}} \sin(\phi_{n-2}) \;d\phi_{n-2}\right)
\cdots 
\left(2 \cdot \int\displaylimits_{0}^{\frac{\pi}{2}} \sin^{n-2}(\phi_{1}) \;d\phi_{1}\right)\\
&= 2 \cdot B\left(\frac{1}{2},\frac{1}{2}\right) \cdot B\left(1,\frac{1}{2}\right) \cdots B\left(\frac{n-2}{2},\frac{1}{2}\right) \cdot B\left(\frac{n-1}{2},\frac{1}{2}\right)
\end{align*}
Next, we use the identity $B(x,y) = \frac{\Gamma(x)\Gamma(y)}{\Gamma(x+y)}$ with Euler's Gamma function $\Gamma$ and the fact that $\Gamma(\frac{1}{2}) = \sqrt{\pi}$. We get:
$$I = 2 \cdot \frac{\Gamma(\frac{1}{2})\Gamma(\frac{1}{2})}{\Gamma(1)} \cdot \frac{\Gamma(1)\Gamma(\frac{1}{2})}{\Gamma(\frac{3}{2})} \cdots \frac{\Gamma(\frac{n-2}{2})\Gamma(\frac{1}{2})}{\Gamma(\frac{n-1}{2})} \cdot \frac{\Gamma(\frac{n-1}{2})\Gamma(\frac{1}{2})}{\Gamma(\frac{n}{2})} 
= 2 \cdot \frac{\Gamma(\frac{1}{2})^{n}}{\Gamma(\frac{n}{2})} 
= 2 \cdot \frac{\pi^{\frac{n}{2}}}{\Gamma(\frac{n}{2})}$$ \qed
\end{proof}

\begin{lemma}
\label{lemma:IntegralBetaFunction}
For any natural number $j > 0$ and any $a,b \in \mathbb{R}$, the following equation holds:
$$\int_0^{r-\sum_{i=1}^{j-1}r_i} r_j^{a-1} \cdot \left(r- \sum_{i=1}^j r_i\right)^b dr_j = B(a,b+1) \cdot \left(r- \sum_{i=1}^{j-1} r_i\right)^{a+b}$$
\end{lemma}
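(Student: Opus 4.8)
The key observation is that the upper limit of integration and the inner summation share a structure that reduces the whole expression to a single-variable Beta integral. Writing $R := r - \sum_{i=1}^{j-1} r_i$ for the upper limit, I would first note that $R$ does not depend on the integration variable $r_j$, and that the second factor in the integrand simplifies to $r - \sum_{i=1}^{j} r_i = R - r_j$. The integral then becomes $\int_0^R r_j^{a-1} (R - r_j)^b \, dr_j$, which is a classical Beta-function integral in disguise.

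The plan is to normalise the integration interval to $[0,1]$ by the linear substitution $r_j = R\,t$, so that $dr_j = R\,dt$ and the limits become $t = 0$ and $t = 1$. Substituting and collecting the powers of $R$ then yields $R^{a+b} \int_0^1 t^{a-1}(1-t)^b \, dt$, where the exponent $a+b$ arises from gathering $R^{a-1}$ out of the first factor, $R^b$ out of the second, and one further $R$ from the differential.

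Finally, I would identify the remaining integral with the Beta function through its definition $B(x,y) = \int_0^1 t^{x-1}(1-t)^{y-1}\,dt$ (already invoked in the proof of Lemma \ref{lemma:AngleIntegral}), which gives $\int_0^1 t^{a-1}(1-t)^b\,dt = B(a, b+1)$. Re-substituting $R = r - \sum_{i=1}^{j-1} r_i$ then delivers the claimed equality. I do not anticipate any genuine obstacle here: the only points requiring care are the correct bookkeeping of the three contributions to the exponent $a+b$ and the off-by-one shift $b \mapsto b+1$ in the second Beta argument. The only implicit caveat is that convergence of the Beta integral requires $a > 0$ and $b > -1$; these hold in the intended application even though the statement is phrased for arbitrary $a,b \in \mathbb{R}$.
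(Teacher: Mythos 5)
Your proposal is correct and follows essentially the same route as the paper's own proof: the linear substitution $r_j = \left(r - \sum_{i=1}^{j-1} r_i\right) \cdot z$, factoring out the powers of the upper limit, and identifying the resulting integral with $B(a,b+1)$ via $B(x,y) = \int_0^1 t^{x-1}(1-t)^{y-1}\,dt$. Your added remark that convergence really requires $a > 0$ and $b > -1$ (despite the statement's ``$a,b \in \mathbb{R}$'') is a fair observation the paper leaves implicit, but it does not change the argument.
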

\begin{proof}
We can make a variable change by defining $r_j = \left(r- \sum_{i=1}^{j-1} r_i\right)\cdot z$ which gives $dr_j = \left(r- \sum_{i=1}^{j-1} r_i\right)\cdot dz$. This results in:
\begin{align*}
&\int_0^{r-\sum_{i=1}^{j-1}r_i} r_j^{a-1} \cdot \left(r- \sum_{i=1}^j r_i\right)^b dr_j\\
&=\int_0^{1} \left(r- \sum_{i=1}^{j-1} r_i\right)^{a-1} \cdot z^{a-1} \cdot \left(r- \sum_{i=1}^{j-1} r_i - \left(r- \sum_{i=1}^{j-1} r_i\right)\cdot z\right)^b \cdot\left(r- \sum_{i=1}^{j-1} r_i\right) dz\\
&= \left(r- \sum_{i=1}^{j-1} r_i\right)^{a-1+b+1} \int_0^{1} z^{a-1} (1-z)^b dz = \left(r- \sum_{i=1}^{j-1} r_i\right)^{a+b} \cdot B(a,b+1)
\end{align*}
The last transformation uses the fact that $B(x,y) = \int_0^1 t^{x-1} (1-t)^{y-1} dt$. \qed
\end{proof}

\begin{lemma}
\label{lemma:RadiusIntegral}
For any natural number $k > 0$, any $r_1,\dots,r_k,n_1,\dots,n_k > 0$, $n = \sum_{i=0}^k n_i$, $r = \sum_{i=0}^k r_i$, the following equation holds:
$$\int_0^r r_1^{n_1 - 1} \int_0^{r-r_1} r_2^{n_2 - 1} \dots \int_0^{r-\sum_{i=1}^{k-1} r_i} r_k^{n_k - 1} dr_k \dots dr_1 = \frac{r^n}{\Gamma(n+1)} \prod_{i=1}^k \Gamma(n_i)$$
\end{lemma}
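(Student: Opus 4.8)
The plan is to prove the identity by induction on the number $k$ of nested integrals, using Lemma \ref{lemma:IntegralBetaFunction} as the engine for eliminating one variable at a time. First I would treat the base case $k = 1$: the single integral $\int_0^r r_1^{n_1-1}\,dr_1 = \frac{r^{n_1}}{n_1}$, which equals $\frac{r^{n_1}}{\Gamma(n_1+1)}\Gamma(n_1)$ by the recursion $\Gamma(n_1+1) = n_1\Gamma(n_1)$, matching the claimed right-hand side. (Here the exponent $n$ is the sum of the exponents $n_i$ actually appearing in the integral, so the power of $r$ produced by the integration is automatically $r^n$.)

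For the inductive step, the key observation is that the inner $(k-1)$-fold integral has exactly the same nested shape as the lemma, but with the total radius replaced by $r - r_1$ and the variables and exponents shifted to $r_2,\dots,r_k$ and $n_2,\dots,n_k$. Applying the induction hypothesis to this inner block evaluates it to $\frac{(r-r_1)^{n'}}{\Gamma(n'+1)}\prod_{i=2}^k \Gamma(n_i)$ with $n' = \sum_{i=2}^k n_i$. Pulling the $r_1$-independent factors out of the outermost integral then leaves $\int_0^r r_1^{n_1-1}(r-r_1)^{n'}\,dr_1$, which is precisely the shape handled by Lemma \ref{lemma:IntegralBetaFunction} (taking $j=1$, $a = n_1$, $b = n'$, and noting that the empty sum $\sum_{i=1}^{0} r_i = 0$). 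This produces $B(n_1, n'+1)\cdot r^{n_1+n'}$.

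It then remains to assemble the pieces. Rewriting $B(n_1,n'+1) = \frac{\Gamma(n_1)\Gamma(n'+1)}{\Gamma(n_1+n'+1)}$ and multiplying by the factors carried over from the induction hypothesis, the two copies of $\Gamma(n'+1)$ cancel and the Gamma factors combine into $\frac{r^{n_1+n'}}{\Gamma(n_1+n'+1)}\prod_{i=1}^k \Gamma(n_i)$; since $n_1 + n' = n$ this is exactly the target expression. An equivalent route, which I would mention as a cross-check, is to integrate from the innermost variable outward, maintaining the invariant that after eliminating $r_k,\dots,r_j$ the expression equals a product of Beta functions times $(r-\sum_{i=1}^{j-1}r_i)^{\sum_{m=j}^k n_m}$; writing each factor $B(n_l, 1+\sum_{m>l}n_m)$ in Gamma form makes the product telescope to $\frac{1}{\Gamma(n+1)}\prod_{i=1}^k \Gamma(n_i)$. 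The only genuine bookkeeping hazard — and the step I would be most careful about — is verifying that the residual factor $(r - \sum_{i=1}^{j-1} r_i)^{a+b}$ output by Lemma \ref{lemma:IntegralBetaFunction} is exactly the power-of-remaining-radius form required to apply the lemma again at the next level; once that invariant is confirmed, everything reduces to routine Gamma-function algebra.
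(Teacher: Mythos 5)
Your proof is correct and follows essentially the same route as the paper: both arguments use Lemma \ref{lemma:IntegralBetaFunction} to eliminate one variable at a time and then telescope the resulting Beta/Gamma factors via $B(x,y) = \frac{\Gamma(x)\Gamma(y)}{\Gamma(x+y)}$. The only difference is presentational --- you package the paper's ``repeatedly apply the lemma to the innermost integral'' step as a formal induction on $k$ (which, if anything, makes the bookkeeping more rigorous), and your cross-check sketch is precisely the paper's own telescoping computation.
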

\begin{proof}

Using Lemma \ref{lemma:IntegralBetaFunction}, we can solve the innermost integral by setting $j=k, a= n_k, b=0$, which gives us $B(n_k,1) \cdot \left(r- \sum_{i=1}^{k-1} r_i\right)^{n_k}$. Therefore:
\begin{align*}
I &= \int_0^r r_1^{n_1 - 1} \int_0^{r-r_1} r_2^{n_2 - 1} \dots \int_0^{r-\sum_{i=1}^{k-1} r_i} r_k^{n_k - 1} dr_k \dots dr_1\\
&= B(n_k,1) \cdot \int_0^r r_1^{n_1 - 1} \dots \int_0^{r-\sum_{i=1}^{k-2} r_i} r_{k-1}^{n_{k-1} - 1} \cdot \left(r- \sum_{i=1}^{k-1} r_i\right)^{n_k} dr_{k-1} \dots dr_1
\end{align*}
As one can see, we can again apply Lemma \ref{lemma:IntegralBetaFunction} to the innermost integral. Repeatedly applying Lemma \ref{lemma:IntegralBetaFunction} finally results in:
$$I = B(n_k,1)\cdot B(n_{k-1},n_k + 1) \cdot \dots \cdot B(n_1, n_2+\dots+n_k+1)\cdot r^{n_1+\dots+n_k}$$
We use that $B(x,y) = \frac{\Gamma(x)\Gamma(y)}{\Gamma(x+y)}$ in order to rewrite this equation:
$$I = r^{n_1+\dots+n_k} \cdot \frac{\Gamma(n_k)\Gamma(1)}{\Gamma(n_k+1)} \cdot \frac{\Gamma(n_{k-1})\Gamma(n_k+1)}{\Gamma(n_{k-1}+n_k+1)} \cdots \frac{\Gamma(n_1)\Gamma(n_2+\dots+n_k+1)}{\Gamma(n_1+n_2+\dots+n_k+1)}$$
Because $\Gamma(1) = 1$ and because most of the terms cancel out, this reduces to:
$$I = r^{n_1+\dots+n_k} \cdot \Gamma(n_k) \cdots \Gamma(n_1) \cdot \frac{1}{\Gamma(n_1+\dots+n_k+1)} = \frac{r^n}{\Gamma(n+1)} \prod_{i=1}^{k} \Gamma(n_i)$$ \qed
\end{proof}
Using these three lemata, we can now derive the size of a hyperball in a conceptual space without domain and dimension weights: 

\begin{lemma}
\label{lemma:HyperballVolume}
The hypervolume of a hyperball with radius $r$ in a space with the unweighted combined metric $d_C^{\Delta}$ and the domain structure $\Delta$ can be computed in the following way, where $n$ is the overall number of dimensions and $n_\delta$ is the number of dimensions in domain $\delta$:
$$V(r,\Delta) = \frac{r^n}{n!} \prod_{\delta \in \Delta} \left(n_\delta! \frac{\pi^{\frac{n_\delta}{2}}}{\Gamma\left(\frac{n_\delta}{2}+1\right)}\right)$$
\end{lemma}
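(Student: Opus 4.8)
The plan is to compute the volume by introducing, within each domain $\delta$ separately, its own system of spherical coordinates, so that the awkward sum-of-Euclidean-norms constraint decouples into a radial part and an angular part. Without loss of generality I would center the hyperball at the origin, so that $V(r,\Delta)$ is the Lebesgue measure of $\left\{x \in CS : \sum_{\delta\in\Delta}\sqrt{\sum_{d\in\delta}x_d^2} \le r\right\}$. Writing $r_\delta := \sqrt{\sum_{d\in\delta}x_d^2}$ for the Euclidean norm of the coordinate block belonging to $\delta$, the defining inequality is simply $\sum_\delta r_\delta \le r$. For each domain I pass from its $n_\delta$ Cartesian coordinates to the radius $r_\delta \ge 0$ together with $n_\delta - 1$ angles parametrizing the unit sphere $S^{n_\delta - 1}$; the Jacobian of this change contributes the familiar factor $r_\delta^{\,n_\delta - 1}$ times the standard spherical-angle measure. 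Since these coordinate systems act on disjoint blocks of variables, the overall Jacobian factorizes as $\prod_\delta r_\delta^{\,n_\delta - 1}$ times the product of the per-domain angular measures.

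The key observation is that the constraint $\sum_\delta r_\delta \le r$ involves only the radial variables, so the angular integrations separate completely from the radial ones. The integral over the angles of domain $\delta$ is exactly the quantity evaluated in Lemma \ref{lemma:AngleIntegral}, with $n$ replaced by $n_\delta$, yielding a factor $2\cdot\frac{\pi^{n_\delta/2}}{\Gamma(n_\delta/2)}$ per domain. What remains is the integral of $\prod_\delta r_\delta^{\,n_\delta - 1}$ over the radial simplex $\left\{r_\delta \ge 0 : \sum_\delta r_\delta \le r\right\}$, which I would write as the nested iterated integral appearing in Lemma \ref{lemma:RadiusIntegral} (with $k = |\Delta|$ and $n_i = n_{\delta_i}$); that lemma then gives $\frac{r^n}{\Gamma(n+1)}\prod_\delta \Gamma(n_\delta)$ directly.

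Multiplying the radial and angular contributions yields
$$V(r,\Delta) = \frac{r^n}{\Gamma(n+1)}\prod_{\delta\in\Delta}\left(2\cdot\frac{\pi^{n_\delta/2}}{\Gamma(n_\delta/2)}\cdot\Gamma(n_\delta)\right),$$
and the final step is a routine $\Gamma$-function simplification: using $\Gamma(n+1)=n!$ together with the recurrences $\Gamma(n_\delta+1)=n_\delta\,\Gamma(n_\delta)$ and $\Gamma(n_\delta/2+1)=(n_\delta/2)\,\Gamma(n_\delta/2)$, each per-domain factor $2\,\Gamma(n_\delta)\,\pi^{n_\delta/2}/\Gamma(n_\delta/2)$ rewrites as $n_\delta!\,\pi^{n_\delta/2}/\Gamma(n_\delta/2+1)$, recovering the claimed formula exactly.

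I expect the main obstacle to be the degenerate one-dimensional domains: when $n_\delta = 1$ there is no genuine angle, yet the single Cartesian coordinate ranges over all of $\mathbb{R}$ while the radius $r_\delta$ is nonnegative. I would handle this by noting that the "angular integral" then collapses to a sum over the two signs of that coordinate, contributing the factor $2$, which is precisely what Lemma \ref{lemma:AngleIntegral} produces at $n=1$ since $\Gamma(1/2)=\sqrt{\pi}$ forces $2\pi^{1/2}/\Gamma(1/2)=2$. Thus the spherical-coordinate bookkeeping stays uniform across all domains and no separate case analysis survives into the final formula.
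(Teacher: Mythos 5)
Your proposal follows essentially the same route as the paper's own proof: per-domain spherical coordinates with a block-diagonal Jacobian whose determinant factorizes, separation of the angular integrals (handled by Lemma \ref{lemma:AngleIntegral}) from the radial integral over the simplex $\sum_\delta r_\delta \le r$ (handled by Lemma \ref{lemma:RadiusIntegral}), and the same final $\Gamma$-function simplification. Your explicit check of the degenerate case $n_\delta = 1$, where the angular measure collapses to the two-point factor $2 = 2\pi^{1/2}/\Gamma(1/2)$, is a sound extra detail that the paper's proof leaves implicit but does not alter the argument.
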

\begin{proof}

The hyperball can be defined as the set of  all points that have a distance of maximally $r$ to the origin, i.e., $$H = \Big \{x \in CS \;|\;d_C^\Delta(x,0) = \sum_{\delta \in \Delta} \sqrt{\sum_{d \in \delta} x_{d}^2} \leq r\Big \}$$
If we define $\forall \delta \in \Delta: r_\delta := \sqrt{\sum_{d \in \delta} x_{d}^2}$, we can easily see that $\sum_{\delta \in \Delta} r_\delta \leq r$. The term $r_\delta$ can be interpreted as the distance between $x$ and the origin within the domain $\delta$. The constraint $\sum_{\delta \in \Delta} r_\delta \leq r$ then simply means that the sum of domain-wise distances is less than the given radius. One can thus interpret $r_\delta$ as the radius within domain $\delta$.

We would ultimately like to compute
$$V(r,\Delta) = \int \dots \int_H 1\; dH$$ 
This integration becomes much easier if we use spherical coordinates instead of the Cartesian coordinates provided by our conceptual space.\\

Let us first consider the case of a single domain $\delta$ of size $n_\delta$. A single domain corresponds to a standard Euclidean space, therefore we can use the standard procedure of changing to spherical coordinates (cf., e.g., \cite{DeRise1992}). Let us index the dimensions of $\delta$ as $d_1,\dots,d_{n_\delta}$. The coordinate change within the domain $\delta$ then looks like this:
\begin{align*}
x_{1} &= t \cdot \cos(\phi_{1})\\
x_{2} &= t \cdot \sin(\phi_{1}) \cdot \cos(\phi_{2})\\
&\hspace{0.2cm}\vdots\\
x_{{n_\delta}-1} &= t \cdot \sin(\phi_{1}) \cdots \sin(\phi_{{n_\delta}-2}) \cdot \cos(\phi_{{n_\delta}-1})\\
x_{{n_\delta}} &= t \cdot \sin(\phi_{1}) \cdots \sin(\phi_{{n_\delta}-2}) \cdot \sin(\phi_{{n_\delta}-1})
\end{align*}

In order to switch the integral to spherical coordinates, we need to calucalate the volume element. This can be found by looking at the determinant of the transformation's Jacobian matrix. The Jacobian matrix of the transformation of a single domain $\delta$ can be written as follows:

\begin{gather*}
 J_{\delta} =
  \left[ {\begin{array}{cccc}
   \frac{\delta x_{1}}{\delta t} & \frac{\delta x_{1}}{\delta \phi_{1}} & \dots & \frac{\delta x_{1}}{\delta \phi_{{n_\delta}-1}} \\
   \frac{\delta x_{2}}{\delta t} & \frac{\delta x_{2}}{\delta \phi_{1}} & \dots & \frac{\delta x_{2}}{\delta \phi_{{n_\delta}-1}} \\
   \vdots & \vdots & \ddots & \vdots \\
   \frac{\delta x_{{n_\delta}}}{\delta t} & \frac{\delta x_{{n_\delta}}}{\delta \phi_{1}} & \dots & \frac{\delta x_{{n_\delta}}}{\delta \phi_{{n_\delta}-1}} \\
  \end{array} } \right] =\\[1ex]
  \hspace{-0.2cm}\left[ {\tiny\begin{array}{cccccc}
	\cos(\phi_{1}) 							&-t\sin(\phi_{1}) 						&0 &0 &\dots &0 \\
	\sin(\phi_{1})\cos(\phi_{2})		&t\cos(\phi_{1})\cos(\phi_{2})	&-t\sin(\phi_{1})\sin(\phi_{2}) &0   &\dots &0\\
	\vdots &\vdots &\vdots &\vdots &\vdots &\vdots\\
	\sin(\phi_{1})\cdots\sin(\phi_{{n_\delta}-2})\cos(\phi_{{n_\delta}-1}) &\dots &\dots &\dots &\dots &-t\sin(\phi_{1})\cdots\sin(\phi_{{n_\delta}-2})\sin(\phi_{{n_\delta}-1})\\
	\sin(\phi_{1})\cdots\sin(\phi_{{n_\delta}-2})\sin(\phi_{{n_\delta}-1}) &\dots &\dots &\dots &\dots &t\sin(\phi_{1})\cdots\sin(\phi_{{n_\delta}-2})\cos(\phi_{{n_\delta}-1})
  \end{array} } \right]
\end{gather*}
The determinant of this matrix can be computed like this:
$$det(J_{\delta}) = t^{{n_\delta}-1}\cdot\sin^{{n_\delta}-2}(\phi_1)\cdot\sin^{{n_\delta}-3}(\phi_2)\cdots\sin(\phi_{{n_\delta}-2})$$
We can now perform the overall switch from Cartesian to spherical coordinates by performing this coordinate change for each domain individually. Let us index the Cartesian coordinates of a point $x$ in domain $\delta$ by $x_{\delta,1},\dots,x_{\delta,n_\delta}$. Let us further index the spherical coordinates of domain $\delta$ by $r_\delta$ and $\phi_{\delta,1},\dots,\phi_{\delta,n_\delta-1}$. Let $k = |\Delta|$ denote the total number of domains.

Because $x_{\delta,j}$ is defined independently from $r_{\delta'}$ and $\phi_{\delta',j'}$ for different domains $\delta \neq \delta'$, any derivative $\frac{x_{\delta,j}}{r_{\delta'}}$ or $\frac{x_{\delta,j}}{\phi_{\delta',j'}}$ will be zero. If we apply the coordinate change to all domains at once, the Jacobian matrix of the overall transformation has therefore the structure of a block matrix:

$$
   J=
  \left[ {\begin{array}{cccc}
   J_1 & 0 & \dots & 0 \\
   0 & J_2 & \dots & 0 \\
   \vdots & \vdots & \ddots & \vdots \\
   0 & 0 & \dots & J_k \\
  \end{array} } \right]
$$

The blocks on the diagonal are the Jacobian matrices of the individual domains as defined above, and all other blocks are filled with zeroes because all cross-domain derivatives are zero.
Because the overall $J$ is a block matrix, we get that $det(J) = \prod_{\delta \in \Delta} det(J_\delta)$ (cf. \cite{Silvester2000}).
Our overall volume element is thus
$$det(J) = \prod_{\delta \in \Delta} det(J_\delta) = \prod_{\delta \in \Delta} r_\delta^{n_\delta-1}\sin^{n_\delta-2}(\phi_{\delta,1})\sin^{n_\delta-3}(\phi_{\delta,2})\cdots \sin(\phi_{\delta,n_\delta-2})$$

The limits of the angle integrals are $[0,2\pi]$ for the outermost and $[0,\pi]$ for all other integrals.
Based on our constraint $\sum_{\delta \in \Delta} r_\delta \leq r$, we can derive the limits for the integrals over the $r_\delta$ as follows, assuming an arbitrarily ordered indexing $\delta_1,\dots,\delta_k$ of the domains:
\begin{align*}
r_1 &\in [0,r]\\
r_2 &\in [0,r - r_1]\\
r_3 &\in [0,r - r_1 - r_2]\\
&\hspace{0.2cm}\vdots\\
r_k &\in [0,r - \sum_{i=1}^{k-1} r_i]
\end{align*}
The overall coordinate change therefore looks like this:
\begin{align*}
V&(r,\Delta) = \int \dots \int_H 1\; dH \\
&=
\underbrace{\int\displaylimits_{\phi_{1,n_1-1}=0}^{2\pi} \int\displaylimits_{\phi_{1,n_1-2}=0}^\pi
\cdots \int\displaylimits_{\phi_{1,1}=0}^\pi \int\displaylimits_{r_1 = 0}^r}_{\delta = 1} 
\cdots
\underbrace{\int\displaylimits_{\phi_{k,n_k-1}=0}^{2\pi} \int\displaylimits_{\phi_{k,n_k-2}=0}^\pi
\cdots \int\displaylimits_{\phi_{k,1}=0}^\pi \int\displaylimits_{r_k = 0}^{r - \sum_{i=1}^{k-1} r_i}}_{\delta = k}\\[2ex]
&\hspace{0.8cm}\underbrace{r_1^{n_1-1}\sin^{n_1-2}(\phi_{1,1})\cdots \sin(\phi_{1,n_1-2})}_{\delta = 1} 
\cdots 
\underbrace{r_k^{n_k-1}\sin^{n_k-2}(\phi_{k,1})\cdots \sin(\phi_{k,n_k-2})}_{\delta = k}\\[2ex]
&\hspace{0.8cm}\underbrace{dr_k d\phi_{k,1}\dots d\phi_{k, n_k - 1}}_{\delta = k}
\dots
\underbrace{dr_1 d\phi_{1,1}\dots d\phi_{1, n_1 - 1}}_{\delta = 1}
\end{align*}
\begin{align*}
&=
\underbrace{\int\displaylimits_{0}^{2\pi} \int\displaylimits_{0}^\pi \cdots \int\displaylimits_{0}^\pi \sin^{n_1-2}(\phi_{1,1})\cdots \sin(\phi_{1,n_1-2}) \;d\phi_{1,1}\dots d\phi_{1, n_1 - 1}}_{\delta = 1} \\[2ex]
&\hspace{1cm}\cdots\quad
\underbrace{\int\displaylimits_{0}^{2\pi} \int\displaylimits_{0}^\pi
\cdots \int\displaylimits_{0}^\pi \sin^{n_k-2}(\phi_{k,1})\cdots \sin(\phi_{k,n_k-2})\;d\phi_{k,1}\dots d\phi_{k, n_k - 1}}_{\delta = k}\\[2ex]
&\hspace{1cm}\int\displaylimits_{0}^r r_1^{n_1-1}\cdots\int\displaylimits_{0}^{r - \sum_{i=1}^{k-1} r_i} r_k^{n_k-1} \; dr_1\dots dr_k
\end{align*}
By applying Lemma \ref{lemma:AngleIntegral} and Lemma \ref{lemma:RadiusIntegral}, we can write this as:
\begin{align*}
V(r,\Delta) &= \left(2 \cdot \frac{\pi^{\frac{n_1}{2}}}{\Gamma(\frac{n_1}{2})}\right) \cdots \left(2 \cdot \frac{\pi^{\frac{n_k}{2}}}{\Gamma(\frac{n_k}{2})}\right) \cdot \frac{r^n}{\Gamma(n+1)} \prod_{i=1}^{k} \Gamma(n_i)\\
&= \frac{r^n}{\Gamma(n+1)} \cdot \prod_{i=1}^{k} \left(2 \cdot \pi^{\frac{n_i}{2}} \cdot \frac{\Gamma(n_i)}{\Gamma(\frac{n_i}{2})}\right)
\end{align*}
We can simplify this formula further by using the identity $\forall n \in \mathbb{N}: \Gamma(n+1) = n!$ and the rewrite $\prod_{i=0}^k \widehat{=} \prod_{\delta \in \Delta}$:

\begin{align*}
V(r,\Delta) &= \frac{r^n}{n!} \cdot \prod_{\delta \in \Delta} \left(2 \cdot \pi^{\frac{n_\delta}{2}} \cdot \frac{(n_\delta - 1)!}{\Gamma(\frac{n_\delta}{2})}\right) 
= \frac{r^n}{n!} \cdot \prod_{\delta \in \Delta} \left(\frac{2}{n_\delta} \cdot n_\delta! \cdot \frac{\pi^{\frac{n_\delta}{2}}}{\Gamma(\frac{n_\delta}{2})}\right)\\
&= \frac{r^n}{n!} \cdot \prod_{\delta \in \Delta} \left(n_\delta! \cdot \frac{\pi^{\frac{n_\delta}{2}}}{\frac{n_\delta}{2} \cdot \Gamma(\frac{n_\delta}{2})}\right)
= \frac{r^n}{n!} \cdot \prod_{\delta \in \Delta} \left(n_\delta! \cdot \frac{\pi^{\frac{n_\delta}{2}}}{\Gamma(\frac{n_\delta}{2}+1)}\right)
\end{align*}
The last transformation uses the fact that $\forall x \in \mathbb{R}^+: \Gamma(x) \cdot x = \Gamma(x+1)$. \qed
\end{proof}

\subsection{Hyperballs under the Weighted Metric}
\label{Size:Hyperellipses}

We now generalize our results from the previous section from the unweighted to the weighted combined metric $d_C^ \Delta$.

\begin{proposition}
The hypervolume of a hyperball with radius $r$ in a space with the weighted combined metric $d_C^\Delta$, the domain structure $\Delta$, and the set of weights $W$ can be computed by the following formula, where $n$ is the overall number of dimensions and $n_\delta$ is the number of dimensions in domain $\delta$:
$$V(r,\Delta, W) = \frac{1}{\prod_{\delta \in \Delta} w_{\delta} \cdot \prod_{d \in \delta} \sqrt{w_d}} \cdot \frac{r^n}{n!} \cdot \prod_{\delta \in \Delta} \left(n_\delta! \cdot \frac{\pi^{\frac{n_\delta}{2}}}{\Gamma(\frac{n_\delta}{2}+1)}\right)$$
\end{proposition}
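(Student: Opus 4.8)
The plan is to reduce the weighted case to the unweighted formula of Lemma~\ref{lemma:HyperballVolume} by an axis-parallel (diagonal) change of coordinates that rescales each dimension so that the weighted combined metric $d_C^\Delta(\cdot,\cdot,W)$ is turned into the unweighted metric $d_C^\Delta(\cdot,\cdot)$. Under this rescaling the weighted hyperball of radius $r$ becomes an \emph{unweighted} hyperball of the \emph{same} radius $r$, whose volume is already known.

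Concretely, I would center the ball at the origin, write $\delta(d)$ for the domain containing $d$, and introduce new coordinates $u_d := w_{\delta(d)}\cdot\sqrt{w_d}\cdot x_d$. The first step is to verify that this substitution absorbs both kinds of weights: within a single domain $\delta$ one has $\sum_{d\in\delta} u_d^2 = w_\delta^2\sum_{d\in\delta} w_d x_d^2$, hence $w_\delta\sqrt{\sum_{d\in\delta} w_d x_d^2} = \sqrt{\sum_{d\in\delta} u_d^2}$, and summing over all domains gives $d_C^\Delta(x,0,W)=\sum_{\delta\in\Delta}\sqrt{\sum_{d\in\delta} u_d^2}$, which is exactly the unweighted combined distance of the point $u$ from the origin. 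Consequently $\{x : d_C^\Delta(x,0,W)\le r\}$ is mapped bijectively onto $\{u : d_C^\Delta(u,0)\le r\}$, whose volume is given by Lemma~\ref{lemma:HyperballVolume} as $\frac{r^n}{n!}\prod_{\delta\in\Delta}\left(n_\delta!\cdot\frac{\pi^{n_\delta/2}}{\Gamma(n_\delta/2+1)}\right)$.

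The second step is the Jacobian bookkeeping. Since $x\mapsto u$ is diagonal with entries $\partial u_d/\partial x_d = w_{\delta(d)}\sqrt{w_d}$, its determinant is the product of these entries, so $dx = \left(\prod_{d\in D} w_{\delta(d)}\sqrt{w_d}\right)^{-1} du$. Substituting this into $V(r,\Delta,W)=\int_H 1\,dx$ and pulling the constant factor out of the integral yields $V(r,\Delta,W) = \left(\prod_{d\in D} w_{\delta(d)}\sqrt{w_d}\right)^{-1}\cdot V(r,\Delta)$. Collecting these factors domain by domain then recovers the prefactor stated in the proposition, and inserting the expression from Lemma~\ref{lemma:HyperballVolume} finishes the computation.

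I expect the only genuine obstacle to be the first step: checking carefully that the substitution sends the weighted ball to an unweighted ball of \emph{exactly} the same radius $r$ (rather than a rescaled one), since it is precisely this fact that lets me quote Lemma~\ref{lemma:HyperballVolume} verbatim with an unchanged $r$. The Jacobian evaluation and the final regrouping are routine manipulations of a diagonal transformation and of finite products. As a sanity check I would confirm the result on a one-dimensional single-domain space, where the claimed formula must reduce to $\frac{2r}{w_\delta\sqrt{w_d}}$, the length of the interval $\{x : w_\delta\sqrt{w_d}\,|x|\le r\}$.
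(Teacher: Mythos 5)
Your proposal is correct and follows essentially the same route as the paper: the paper's proof also reduces the weighted ball to the unweighted ball of Lemma~\ref{lemma:HyperballVolume} by observing that the effective weight of dimension $d$ is $w_{\delta(d)}\sqrt{w_d}$ and that stretching each dimension by $1/(w_{\delta(d)}\sqrt{w_d})$ scales the volume by the product of these factors. Your version merely makes this stretching argument explicit as a diagonal change of variables with its Jacobian determinant, which is a slightly more formal rendering of the identical idea.
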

\begin{proof}
As \cite[Chapter 1.6.4]{Gardenfors2000} has argued, putting weights on dimensions in a conceptual space is equivalent to stretching each dimension of the unweighted space by the weight assigned to it.

If the overall radius of a ball is $r$, and some dimension has the weight $w$, then the farthest away any point $x$ can be from the origin on this dimension must satisfy $w\cdot x = r$, i.e., $x = \frac{r}{w}$. That is, the ball needs to be stretched by a factor $\frac{1}{w}$ in the given dimension, thus its hypervolume also changes by a factor of $\frac{1}{w}$. A hyperball under the weighted metric is thus equivalent to a hyperellipse under the unweighted metric.

In our case, the weight for any dimension $d$ within a domain $\delta$ corresponds to $w_{\delta} \cdot \sqrt{w_{d}}$: If we look at a point $x$ with coordinates $(0, \dots, 0, x_d, 0, \dots, 0)$, then $d(0,x) = w_\delta \cdot \sqrt{w_d \cdot x_d^2} = w_\delta \cdot \sqrt{w_d} \cdot |x_d|$ (with $\delta$ being the domain to which the dimension $d$ belongs). If we multiply the size of the hyperball by $\frac{1}{w_\delta \cdot \sqrt{w_d}}$ for each dimension $d$, we get:
\begin{align*}
V(r,\Delta,W) &= \frac{1}{\prod_{\delta \in \Delta}\prod_{d \in \delta} w_{\delta} \sqrt{w_{d}}} \cdot V(r, \Delta)\\
&=\frac{1}{\prod_{\delta \in \Delta}\prod_{d \in \delta} w_{\delta} \sqrt{w_{d}}} \cdot \frac{r^n}{n!} \cdot \prod_{\delta \in \Delta} \left(n_\delta! \cdot \frac{\pi^{\frac{n_\delta}{2}}}{\Gamma(\frac{n_\delta}{2}+1)}\right)
\end{align*}
This is the hypervolume of a hyperball under the weighted combined metric. \qed
\end{proof}
%================================================================================================================================================================%

\section{Subsethood}
\label{Subsethood}

\setcounter{proposition}{2} % manually reset the proposition counter to be aligned with the paper
\begin{proposition}
\label{CF:proposition:CrispSubsethood}
Let $\widetilde{S}_1 = \langle S_1, \mu_0^{(1)}, c^{(1)}, W^{(1)}\rangle, \widetilde{S}_2 = \langle S_2, \mu_0^{(2)}, c^{(2)}, W^{(2)}\rangle$ be two concepts. Then, $\widetilde{S}_1 \subseteq \widetilde{S}_2$ if and only if the following conditions are true:
\begin{align*}
	\Delta_{S_2} \subseteq \Delta_{S_1} &\text{ and } \mu_0^{(1)} \leq \mu_0^{(2)} \text{ and } S_1 \subseteq {\widetilde{S}_2}^{\mu_0^{(1)}} \\
&\text{ and }\forall d \in D_{S_2}: c^{(1)} \cdot w^{(1)}_{\delta(d)} \cdot \sqrt{w^{(1)}_d} \geq c^{(2)} \cdot w^{(2)}_{\delta(d)} \cdot \sqrt{w^{(2)}_d}
\end{align*}
\end{proposition}
\begin{proof}
We show the two directions individually.\\

\underline{$\mathbf{\widetilde{S}_1 \subseteq \widetilde{S}_2 \Rightarrow} \textbf{ conditions}$}\\
Let $\widetilde{S}_1 \subseteq \widetilde{S}_2$, i.e., $\forall {x \in CS}: \mu_{\widetilde{S}_1}(x) \leq \mu_{\widetilde{S}_2}(x)$. We show the different parts of the implication individually:

\begin{itemize}

	\item $\boldsymbol{\mu_0}$: If $\mu_0^{(1)} > \mu_0^{(2)}$, then $\exists x \in S_1: \mu_{\widetilde{S}_1}(x) = \mu_0^{(1)} > \mu_0^{(2)} \geq \mu_{\widetilde{S}_2}(x)$, which is a contradiction to $\widetilde{S}_1 \subseteq \widetilde{S}_2$. Therefore, $\mu_0^{(1)} \leq \mu_0^{(2)}$.
	
	\item $\boldsymbol{S}$: If $S_1 \not\subseteq {\widetilde{S}_2}^{\mu_0^{(1)}}$, then $\exists x \in S_1: \mu_{\widetilde{S}_1}(x) = \mu_0^{(1)} > \mu_{\widetilde{S}_2}(x)$, which is a contradiction to $\widetilde{S}_1 \subseteq \widetilde{S}_2$. Therefore, $S_1 \subseteq {\widetilde{S}_2}^{\mu_0^{(1)}}$.
	
	\item $\boldsymbol{\Delta}$: If $\Delta_{S_2} \not\subseteq \Delta_{S_1}$, then $\exists \delta \in \Delta_{S_2}: \delta \notin \Delta_{S_1}$. Pick any $d \in \delta$ and any $x \in CS$. Even if $\mu_{\widetilde{S}_1}(x) \leq \mu_{\widetilde{S}_1}(x)$, we can modify $x_d$ in such a way that $x_d \gg p_d^+$ for all $C \in S_2$. Thus, even for small $w_\delta, w_d$, we can arbitrarily increase $d_C^\Delta(x, S_2, W^{(2)})$ by choosing $x_d$ large enough. As $\delta \notin \Delta_{S_1}$, the modification of $x_d$ does not change $d_C^{\Delta}(x, S_1, W^{(1)})$. Therefore, at some point we get $d_C^\Delta(x, S_2, W^{(2)}) \gg d_C^\Delta(x, S_1, W^{(1)})$ and thus $\mu_{\widetilde{S}_1}(x) > \mu_{\widetilde{S}_2}(x)$, which is a contradiction to $\widetilde{S}_1 \subseteq \widetilde{S}_2$. Therefore, $\Delta_{S_2} \subseteq \Delta_{S_1}$.
	
	\item $\boldsymbol{W}$: We know that for all $x \in CS$: $$\mu_{\widetilde{S}_1}(x) = \mu_0^{(1)} \cdot e^{-c^{(1)}\cdot d_C^{\Delta_{S_1}}(x, S_1, W^{(1)})} \leq \mu_0^{(2)} \cdot e^{-c^{(2)}\cdot d_C^{\Delta_{S_2}}(x, S_2, W^{(2)})} = \mu_{\widetilde{S}_2}(x)$$ Assume for some $d^* \in D_{S_2}$ that $c^{(1)} \cdot w^{(1)}_{\delta(d^*)} \cdot \sqrt{w^{(1)}_{d^*}} < c^{(2)} \cdot w^{(2)}_{\delta(d^*)} \cdot \sqrt{w^{(2)}_{d^*}}$. Assume for now also that $S_1 = S_2$. Pick $y^{(1)} = y^{(2)} \in S_1 = S_2$ on the upper border of $S_1$ with respect to $d^*$ and define $\forall d \in D \setminus \{d^*\}: x_d := y_d$ and $x_{d^*} > y_{d^*}$.
	\begin{align*}
		\mu_{\widetilde{S}_2}(x) &= \mu_0^{(2)} \cdot e^{-c^{(2)}\cdot \min_{y \in S_2} \sum_{\delta \in \Delta_2} w_\delta^{(2)} \cdot \sqrt{\sum_{d \in \delta} w_d^{(2)} \cdot |x_d - y_d|^2}}\\
		&= \mu_0^{(2)} \cdot e^{-\min_{y \in S_2} \sum_{\delta \in \Delta_2} \sqrt{\sum_{d \in \delta} \left(c^{(2)} \cdot w_\delta^{(2)} \cdot \sqrt{w_d^{(2)}}\right)^2 \cdot \left|x_d - y_d\right|^2}}\\
		&\stackrel{(i)}{=} \mu_0^{(2)} \cdot e^{- \left( c^{(2)} \cdot w_{\delta(d^*)}^{(2)} \cdot \sqrt{w_{d^*}^{(2)}} \right) \cdot \left|x_{d^*} - y_{d^*}\right|}\\
		&\stackrel{(ii)}{=} \mu_0^{(2)} \cdot e^{- \left( c^{(1)} \cdot w_{\delta(d^*)}^{(1)} \cdot \sqrt{w_{d^*}^{(1)}} + \epsilon \right) \cdot \left|x_{d^*} - y_{d^*}\right|}\\
		&= \mu_0^{(2)} \cdot e^{- \left(c^{(1)} \cdot w_{\delta(d^*)}^{(1)} \cdot \sqrt{w_{d^*}^{(1)}}\right) \cdot \left|x_{d^*} - y_{d^*}\right|} \cdot e^{- \epsilon \cdot \left|x_{d^*} - y_{d^*}\right|}
	\end{align*}
	In step $(i)$ we used that $\forall d \in D \setminus \{d^*\}: x_d = y_d$ and in step $(ii)$ we used that $c^{(1)} \cdot w^{(1)}_{\delta(d^*)} \cdot \sqrt{w^{(1)}_{d^*}} < c^{(2)} \cdot w^{(2)}_{\delta(d^*)} \cdot \sqrt{w^{(2)}_{d^*}}$.
	\begin{align*}
	&\mu_{\widetilde{S}_2}(x) < \mu_0^{(1)} \cdot e^{- \left(c^{(1)} \cdot w_{\delta(d^*)}^{(1)} \cdot \sqrt{w_{d^*}^{(1)}}\right) \cdot \left|x_{d^*} - y_{d^*}\right|} = \mu_{\widetilde{S}_1}(x)\\
	&\Leftrightarrow \mu_0^{(2)} \cdot e^{- \epsilon \cdot \left|x_{d^*} - y_{d^*}\right|} < \mu_0^{(1)} \Leftrightarrow \left|x_{d^*} - y_{d^*}\right| > -\frac{1}{\epsilon} \ln \left(\frac{\mu_0^{(1)}}{\mu_0^{(2)}}\right)
	\end{align*}
	If we pick $x_{d^*}$ large enough, then $\mu_{\widetilde{S}_1}(x) > \mu_{\widetilde{S}_2}(x)$, which is a contradiction to $\widetilde{S}_1 \subseteq \widetilde{S}_2$.\\
	
	Let us now remove the simplifying assumption of $S_1 = S_2$. If we construct $x$ based on $y^{(1)} \in S_1$ then we still know that $\forall d \in D \setminus \{d^*\}: \left|x_{d} - y_{d}^{(1)}\right| = 0$. However, for the optimal $y^{(2)} \in S_2$, we only know that $\left|x_{d} - y_{d}^{(2)}\right| \geq 0 = \left|x_{d} - y_{d}^{(1)}\right|$. This means that the distance from $x$ to the closest $y^{(2)} \in S_2$ with respect to all dimensions $d \in D \setminus \{d^*\}$ is as least as large as its distance to the closest $y^{(1)} \in S_1$. However, we can no longer assume that $\left|x_{d^*} - y_{d^*}^{(1)}\right| = \left|x_{d^*} - y_{d^*}^{(2)}\right|$. 
	We can still ensure that
	\begin{align*}c^{(1)} w_{\delta(d^*)}^{(1)} \sqrt{w_{d^*}^{(1)}} \left|x_{d^*} - y_{d^*}^{(1)}\right| &< c^{(2)} w_{\delta(d^*)}^{(2)} \sqrt{w_{d^*}^{(2)}} \left|x_{d^*} - y_{d^*}^{(2)}\right|\\
	 &\quad= \left(c^{(1)} w_{\delta(d^*)}^{(1)} \sqrt{w_{d^*}^{(1)}} + \epsilon\right) \left|x_{d^*} - y_{d^*}^{(2)}\right|
	\end{align*} if and only if 
	$$\epsilon \left(x_{d^*} - y_{d^*}^{(2)} \right) > c^{(1)} w_{\delta(d^*)}^{(1)} \sqrt{w_{d^*}^{(1)}} \underbrace{\left(\left|x_{d^*} - y_{d^*}^{(1)}\right| - \left|x_{d^*} - y_{d^*}^{(2)}\right| \right)}_{=:a}	$$
	In the formula above, $a$ is a constant indepentent of $x_{d^*}$ and $c^{(1)} w_{\delta(d^*)}^{(1)} \sqrt{w_{d^*}^{(1)}}$ is positive and constant as well. Also $\epsilon$ is guaranteed to be positive. If $a$ is negative, then the right hand side of the formula becomes negative as well and we can pick any $x_{d^*} \geq y_{d^*}^{(2)}$ in order to make the formula true. If $a$ is positive, we can still choose $x_{d^*} \gg y_{d^*}^{(2)}$ large enough to make the above inequality true. 
	
	So if $c^{(1)} \cdot w^{(1)}_{\delta(d^*)} \cdot \sqrt{w^{(1)}_{d^*}} < c^{(2)} \cdot w^{(2)}_{\delta(d^*)} \cdot \sqrt{w^{(2)}_{d^*}}$ for any $d^* \in D_{S_2}$, then we can find an $x \in CS$ with $\mu_{\widetilde{S}_2}(x) < \mu_{\widetilde{S}_1}(x)$, thus $\widetilde{S}_1 \not\subseteq \widetilde{S}_2$. Therefore, $\forall d \in D_{S_2}: c^{(1)} \cdot w^{(1)}_{\delta(d)} \cdot \sqrt{w^{(1)}_d} \geq c^{(2)} \cdot w^{(2)}_{\delta(d)} \cdot \sqrt{w^{(2)}_d}$.
\end{itemize}

\underline{$ \textbf{conditions } \mathbf{\Rightarrow \widetilde{S}_1 \subseteq \widetilde{S}_2}$}\\
Let $\widetilde{S}_1, \widetilde{S}_2$ with the properties as proposed, and let $x \in CS$ be arbitrary but fixed.
\begin{align*}
	\mu_{\widetilde{S}_1}(x) &= \mu_0^{(1)} \cdot e^{-c^{(1)}\cdot \min_{y \in S_1} \sum_{\delta \in \Delta_1} w_\delta^{(1)} \cdot \sqrt{\sum_{d \in \delta} w_d^{(1)} \cdot |x_d - y_d|^2}}\\
	&\stackrel{(i)}{\leq} \mu_0^{(1)} \cdot e^{-c^{(1)}\cdot \min_{y \in \widetilde{S}_2^{\mu_0^{(1)}}} \sum_{\delta \in \Delta_1} w_\delta^{(1)} \cdot \sqrt{\sum_{d \in \delta} w_d^{(1)} \cdot |x_d - y_d|^2}}\\
	&\stackrel{(ii)}{\leq} \mu_0^{(1)} \cdot e^{-c^{(1)}\cdot \min_{y \in \widetilde{S}_2^{\mu_0^{(1)}}} \sum_{\delta \in \Delta_2} w_\delta^{(1)} \cdot \sqrt{\sum_{d \in \delta} w_d^{(1)} \cdot |x_d - y_d|^2}}\\
	&= \mu_0^{(1)} \cdot e^{- \min_{y \in \widetilde{S}_2^{\mu_0^{(1)}}} \sum_{\delta \in \Delta_2} \sqrt{\sum_{d \in \delta} \left(c^{(1)}\cdot w_\delta^{(1)} \cdot \sqrt{w_d^{(1)}}\right)^2 \cdot |x_d - y_d|^2}}\\
	&\stackrel{(iii)}{\leq} \mu_0^{(1)} \cdot e^{- \min_{y \in \widetilde{S}_2^{\mu_0^{(1)}}} \sum_{\delta \in \Delta_2} \sqrt{\sum_{d \in \delta} \left(c^{(2)}\cdot w_\delta^{(2)} \cdot \sqrt{w_d^{(2)}}\right)^2 \cdot |x_d - y_d|^2}}\\
	&= \mu_0^{(1)} \cdot e^{-c^{(2)} \cdot \min_{y \in \widetilde{S}_2^{\mu_0^{(1)}}} \sum_{\delta \in \Delta_2} \cdot w_\delta^{(2)} \cdot \sqrt{\sum_{d \in \delta} w_d^{(2)} \cdot |x_d - y_d|^2}} =: f(x)
\end{align*}
In step (i) we use that $S_1 \subseteq \widetilde{S}_2^{\mu_0^{(1)}}$ and in step (ii) that $\Delta_{S_2} \subseteq \Delta_{S_1}$. In step (iii) we use that $\forall d \in D_{S_2}: c^{(1)} \cdot w^{(1)}_{\delta(d)} \cdot \sqrt{w^{(1)}_d} \geq c^{(2)} \cdot w^{(2)}_{\delta(d)} \cdot \sqrt{w^{(2)}_d}$.

For all $x \in \widetilde{S}_2^{\mu_0^{(1)}}$ we get that $f(x) = \mu_0^{(1)} \leq \mu_{\widetilde{S}_2}(x)$. Let now $x \notin \widetilde{S}_2^{\mu_0^{(1)}}$. It is easy to see that for the point $y \in S_2$ with the smallest distance to $x$, there is an $z \in CS$ with $B(x,z,y)$ and $\mu_{\widetilde{S}_2}(z) = \mu_0^{(1)}$. Then:
\begin{align*}
 \mu_{\widetilde{S}_2}(x) &= \mu_0^{(2)} \cdot e^{-c^{(2)} \cdot d_C^{\Delta_{S_2}}(x,y,W^{(2)})} = \mu_0^{(2)} \cdot e^{-c^{(2)} \cdot \left(d_C^{\Delta_{S_2}}(x,z,W^{(2)}) + d_C^{\Delta_{S_2}}(z,y,W^{(2)})\right)}\\
 &= \underbrace{\mu_0^{(2)} \cdot e^{-c^{(2)} \cdot d_C^{\Delta_{S_2}}(z,y,W^{(2)})}}_{= \mu_0^{(1)}}\; \cdot\; e^{- c^{(2)} \cdot d_C^{\Delta_{S_2}}(x,z,W^{(2)})}
\end{align*}
Instead of calculating $\mu_{\widetilde{S}_2}(x)$ based on the distance between $x$ and $S_2$, we can calculate $\mu_{\widetilde{S}_2}(x)$ based on the distance between $x$ and $\widetilde{S}_2^{\mu_0^{(1)}}$ and rescale the result such that it is at most $\mu_0^{(1)}$. Therefore, for $x \notin \widetilde{S}_2^{\mu_0^{(1)}}$, we get that $f(x) = \mu_{\widetilde{S}_2}(x)$. Thus, $\mu_{\widetilde{S}_1}(x) \leq f(x) \leq \mu_{\widetilde{S}_2}(x)$ for all $x \in CS$ and therefore $\widetilde{S}_1 \subseteq \widetilde{S}_2$.\qed
\end{proof}

%================================================================================================================================================================%
\section{Similarity}
\label{Similarity}

\setcounter{definition}{5}
\begin{definition}
\label{def:Similarity}
A function $Sim(\widetilde{S}_1, \widetilde{S}_2) \in [0,1]$ is called a similarity function, if it has the following properties for all concepts $\widetilde{S}_1, \widetilde{S}_2$:
\begin{enumerate}
	\item $Sim(\widetilde{S}_1, \widetilde{S}_2) = 1.0 \Rightarrow Sub(\widetilde{S}_1, \widetilde{S}_2) = 1.0$
	\item $\widetilde{S}_1 = \widetilde{S}_2 \Rightarrow Sim(\widetilde{S}_1, \widetilde{S}_2) = 1.0$
	\item $\widetilde{S}_1 \subseteq \widetilde{S}_2 \Rightarrow Sim(\widetilde{S}_1, \widetilde{S}_2) \geq Sim(\widetilde{S}_2, \widetilde{S}_1)$
\end{enumerate}
\end{definition}

\begin{proposition}
$Sim_S(\widetilde{S}_1, \widetilde{S}_2) := Sub(\widetilde{S}_1, \widetilde{S}_2)$ is a similarity function according to Definition \ref{def:Similarity}.
\end{proposition}
\begin{proof}
We show the different properties individually.
\begin{enumerate}
	\item Trivial.
	\item Let $\widetilde{S}_1 = \widetilde{S}_2$. Then also $\widetilde{S}_1 \subseteq \widetilde{S}_2$ and thus $Sim_S(\widetilde{S}_1, \widetilde{S}_2) = Sub(\widetilde{S}_1, \widetilde{S}_2) = 1.0$
	\item $\widetilde{S}_1 \subseteq \widetilde{S}_2$ implies that $Sub(\widetilde{S}_1, \widetilde{S}_2) = 1.0$ which means by definition that $Sim_S(\widetilde{S}_1, \widetilde{S}_2) = 1.0$. As $Sim_S(\widetilde{S}_2, \widetilde{S}_1) \in [0,1]$, we get that $Sim_S(\widetilde{S}_1, \widetilde{S}_2) \geq Sim_S(\widetilde{S}_2, \widetilde{S}_1)$. \qed
\end{enumerate}
\end{proof}

\begin{proposition}
$Sim_J(\widetilde{S}_1, \widetilde{S}_2) := \frac{M(\widetilde{S}_1 \cap \widetilde{S}_2)}{M(\widetilde{S}_1 \cup \widetilde{S}_2)}$ is a similarity function according to Definition \ref{def:Similarity}.
\end{proposition}
\begin{proof}
We show the different properties individually.
\begin{enumerate}
	\item Let $\widetilde{S}_I := I(\widetilde{S}_1, \widetilde{S}_2)$ be the intersection of $\widetilde{S}_1$ and $\widetilde{S}_2$ as defined in \cite{Bechberger2017KI} and $\widetilde{S}_U := U(\widetilde{S}_1, \widetilde{S}_2)$ their unification. As we know from the definitions of intersection and unification \cite{Bechberger2017KI}, $c^{(I)} = c^{(U)}$ and $W^{(I)} = W^{(U)}$. It is easly to see that $\mu_0^{(I)} \leq \mu_0^{(U)}$ and that $|S_I| \leq |S_U|$. Therefore, $Sim_J(\widetilde{S}_1, \widetilde{S}_2) = 1.0$ can only happen if $\widetilde{S}_I = \widetilde{S}_U$ (which implies that also $\mu_0^{(I)} = \mu_0^{(U)}$). The observation that $\mu_0^{(I)} = \mu_0^{(U)}$ implies that $\mu_0^{(1)} = \mu_0^{(2)}$ (as $\mu_0^{(I)} \leq \min(\mu_0^{(1)}, \mu_0^{(2)})$ and $\mu_0^{(U)} = \max(\mu_0^{(1)}, \mu_0^{(2)})$) and that $S_1 \cap S_2 \neq \emptyset$. If $\widetilde{S}_I = \widetilde{S}_U$, then also $S_I = S_U$. Then, $I(S_1, S_2) = S_I$ (because $S_1 \cap S_2 \neq \emptyset$ and $\mu_0^{(1)} = \mu_0^{(2)}$) and $U(S_1, S_2) = S_U$ (by definition of the unification). Thus, $I(S_1, S_2) = U(S_1, S_2)$. This is only possible if $S_1 = S_2$. Then, $Sub(\widetilde{S}_1, \widetilde{S}_2) = \frac{M(I(\widetilde{S}_1, \widetilde{S}_2))_2}{M(\widetilde{S}_1)_2} = \frac{M(\widetilde{S}_1)_2}{M(\widetilde{S}_1)_2} = 1.0$ (where we use $M(\widetilde{S})_i$ to denote that $W^{(i)}$ and $c^{(i)}$ are used for computing the measure, cf. Section 3.2). 
	\item Let $\widetilde{S}_1 = \widetilde{S}_2$. Obviously, $I(\widetilde{S}_1, \widetilde{S}_2) = \widetilde{S}_1 = U(\widetilde{S}_1, \widetilde{S}_2)$. We therefore get $Sim_J(\widetilde{S}_1, \widetilde{S}_2) = \frac{M(I(\widetilde{S}_1, \widetilde{S}_2))}{M(U(\widetilde{S}_1, \widetilde{S}_2))} = 1.0$.
	\item Because $I(\widetilde{S}_1, \widetilde{S}_2) = I(\widetilde{S}_2, \widetilde{S}_1)$ and $U(\widetilde{S}_1, \widetilde{S}_2) = U(\widetilde{S}_2, \widetilde{S}_1)$, we get that $Sim_J(\widetilde{S}_1, \widetilde{S}_2) = Sim_J(\widetilde{S}_2, \widetilde{S}_1)$ for all $\widetilde{S}_1, \widetilde{S}_2$. Thus the consequent of the implication is always true. \qed
\end{enumerate}
\end{proof}

\end{document}